\icmltitlerunning{Instance-Optimal Compressed Sensing via Posterior
Sampling}
\newtheorem{theorem}{Theorem}[section]
\newtheorem{lemma}[theorem]{Lemma}
\newtheorem{definition}[theorem]{Definition}
\newtheorem{claim}[theorem]{Claim}
\newcounter{example}[section]
\def\ci{\perp\!\!\!\perp}
\newcommand{\norm}[1]{\|#1\|}
\newcommand{\inner}[1]{\langle#1\rangle}
\newcommand{\wh}{\widehat}
\newcommand{\eps}{\varepsilon}
\newcommand{\R}{\mathbb{R}}
\newcommand{\RN}[1]{%
  \textup{\uppercase\expandafter{\romannumeral#1}}%
}
\newcommand{\vertiii}[1]{{\left\vert\kern-0.25ex\left\vert\kern-0.25ex\left\vert #1 
		\right\vert\kern-0.25ex\right\vert\kern-0.25ex\right\vert}}
\DeclareMathOperator*{\E}{\mathbb{E}}
\DeclareMathOperator{\supp}{supp}
\DeclareMathOperator{\esssup}{ess\,sup}
\DeclareMathOperator*{\argmin}{arg\,min}
\newcommand{\bx}{\mathbf{x}}
\newcommand{\xtilde}{\tilde{x}}
\newcommand{\ztilde}{\tilde{z}}
\newcommand{\xhat}{\wh{x}}
\newcommand{\zhat}{\wh{z}}
\newcommand{\cB}{\mathcal B}
\newcommand{\cN}{\mathcal N}
\newcommand{\cW}{\mathcal W}
\newcommand{\bbN}{\mathbb N}
\DeclareMathOperator{\cov}{Cov}
\DeclareMathOperator{\logcov}{\log \cov}
\newcommand{\define}[4][ignore]{%
  \ifstrequal{#1}{ignore}{}{
  \@namedef{thmtitle@#2}{#1}}%
  \@namedef{thm@#2}{#4}%
  \@namedef{thmtypen@#2}{lemma}%
  \newtheorem{thmtype@#2}[theorem]{#3}%
  \newtheorem*{thmtypealt@#2}{#3~\ref{#2}}%
}
\newcommand{\state}[1]{%
  \@namedef{curthm}{#1}
  \@ifundefined{thmtitle@#1}{
  \begin{thmtype@#1}
    }{
  \begin{thmtype@#1}[\@nameuse{thmtitle@#1}]
  }
    \label{#1}
    \@nameuse{thm@#1}
  \end{thmtype@#1}
  \@ifundefined{thmdone@#1}{
  \@namedef{thmdone@#1}{stated}%
  }{}
}
\newcommand{\restate}[1]{%
  \@namedef{curthm}{#1}
  \@ifundefined{thmtitle@#1}{
    \begin{thmtypealt@#1}
    }{
  \begin{thmtypealt@#1}[\@nameuse{thmtitle@#1}]
  }
    \@nameuse{thm@#1}
  \end{thmtypealt@#1}
  \@ifundefined{thmdone@#1}{
  \@namedef{thmdone@#1}{stated}%
  }{}
}
\newcommand{\thmlabel}[1]{
  \@ifundefined{thmdone@\@nameuse{curthm}}{\label{#1}
    }{\tag*{\eqref{#1}}}
}
\begin{document}

\twocolumn[
\icmltitle{Instance-Optimal Compressed Sensing via Posterior Sampling}

% It is OKAY to include author information, even for blind
% submissions: the style file will automatically remove it for you
% unless you've provided the [accepted] option to the icml2021
% package.

% List of affiliations: The first argument should be a (short)
% identifier you will use later to specify author affiliations
% Academic affiliations should list Department, University, City, Region, Country
% Industry affiliations should list Company, City, Region, Country

% You can specify symbols, otherwise they are numbered in order.
% Ideally, you should not use this facility. Affiliations will be numbered
% in order of appearance and this is the preferred way.

\begin{icmlauthorlist}
\icmlauthor{Ajil Jalal}{ece}
\icmlauthor{Sushrut Karmalkar}{cs}
\icmlauthor{Alexandros G. Dimakis}{ece}
\icmlauthor{Eric Price}{cs}
\end{icmlauthorlist}

\icmlaffiliation{ece}{University of Texas at Austin, Department of Electrical and Computer Engineering}
\icmlaffiliation{cs}{University of Texas at Austin, Department of Computer Science}

\icmlcorrespondingauthor{Ajil Jalal}{ajiljalal@utexas.edu}
\icmlcorrespondingauthor{Sushrut Karmalkar}{sushrutk@cs.utexas.edu}
\icmlcorrespondingauthor{Alexandros G. Dimakis}{dimakis@austin.utexas.edu}
\icmlcorrespondingauthor{Eric Price}{ecprice@cs.utexas.edu}

% You may provide any keywords that you
% find helpful for describing your paper; these are used to populate
% the "keywords" metadata in the PDF but will not be shown in the document
\icmlkeywords{Compressed Sensing, Bayesian Compressed Sensing,
Generative Priors, Deep Generative Models, Langevin Dynamics,
Posterior Sampling, Wassertein Distance}

\vskip 0.3in
]

% this must go after the closing bracket ] following \twocolumn[ ...

% This command actually creates the footnote in the first column
% listing the affiliations and the copyright notice.
% The command takes one argument, which is text to display at the start of the footnote.
% The \icmlEqualContribution command is standard text for equal contribution.
% Remove it (just {}) if you do not need this facility.

\printAffiliationsAndNotice{Code and models available at:
\url{https://github.com/ajiljalal/code-cs-fairness}.}  % leave blank if no need to mention equal contribution
% \printAffiliationsAndNotice{\icmlEqualContribution} % otherwise use the standard text.
\begin{abstract}
  We characterize the measurement complexity of compressed sensing of
  signals drawn from a known prior distribution, even when the support
  of the prior is the entire space (rather than, say, sparse vectors).
  We show for Gaussian measurements and \emph{any} prior distribution
  on the signal, that the posterior sampling estimator achieves
  near-optimal recovery guarantees.  Moreover, this result is robust
  to model mismatch, as long as the distribution estimate (e.g., from
  an invertible generative model) is close to the true distribution in
  Wasserstein distance. We implement the posterior sampling
  estimator for deep generative priors using Langevin dynamics, and
  empirically find that it produces accurate estimates with more
  diversity than MAP.
\end{abstract}

\section{Introduction}

The goal of compressed sensing is to recover a structured signal from
a relatively small number of linear measurements. 
The setting of such linear inverse problems has numerous and diverse 
applications ranging from Magnetic Resonance Imaging~\cite{lustig2008compressed,lustig2007sparse}, 
neuronal spike trains~\cite{hegde2009compressive}
and efficient sensing cameras~\cite{duarte2008single}. 
Estimating a signal in $\R^n$ would in general require
$n$ linear measurements, but because real-world signals are
structured---i.e., compressible---one is often able to estimate them
with $m \ll n$ measurements.

Formally, we would like to estimate a ``signal'' $x^* \in \R^n$ from
noisy linear measurements,
\[
  y = Ax^* + \xi
\]
for a measurement matrix $A \in \R^{m \times n}$ and noise vector
$\xi \in \R^m$.  We will focus on the i.i.d.\ Gaussian setting, where
$A_{ij} \sim \cN(0, \frac{1}{m})$ and
$\xi_i \sim \cN(0, \frac{\sigma^2}{m})$, and one would like to recover
$\wh{x}$ from $(A, y)$ such that
\begin{align}\label{eq:l2l2}
  \norm{x^* - \wh{x}} \leq C\sigma
\end{align}
with high probability for some constant $C$.  When $x^*$ is $k$-sparse, this was shown
by~Cand\'es, Romberg, and Tao~\cite{candes2006stable} to be possible for $m$ at least
$O(k \log \frac{n}{k})$.

Over the past 15 years, compressed sensing has been extended in a wide
variety of remarkable ways, including by generalizing from sparsity to
other signal structures, such as those given by trees~\cite{chen12},
graphs~\cite{xu2011compressive}, manifolds~\cite{chen2010compressive, xu2008compressed}, or deep generative
models~\cite{bora2017compressed,asim2019invertible}.  These are all essentially frequentist
approaches to the problem: they define a small \emph{set} of
``structured'' signals $x$, and ask for recovery of every such signal.

Such set-based approaches have limitations.  For
example,~\cite{bora2017compressed} uses the structure given by a deep
generative model $G: \R^k \to \R^n$; with $O(k d \log n)$ measurements
for $d$-layer networks, accurate recovery is guaranteed for every
signal $x^*$ near the range of $G$.  But this completely ignores the
\emph{distribution} over the range.  Generative models like
Glow~\cite{kingma2018glow} and pixelRNN~\cite{oord2016pixel} have seed length $k=n$ and range
equal to the entire $\R^n$.  Yet because these models are designed to approximate reality, and real images can be compressed, we know that compressed sensing is possible in principle.

This leads to the question: Given signals drawn from some
\emph{distribution} $R$, can we characterize the number of linear
measurements necessary for recovery, with both upper and lower bounds?
Such a Bayesian approach has previously been considered for
sparsity-inducing product
distributions~\cite{aeron2010information,zhou2014bayesian} but not
general distributions.

Second, suppose that we don't know the real distribution $R$, but
instead have an approximation $P$ of $R$ (e.g., from a GAN or
invertible generative model).  In what sense should $P$ approximate
$R$ for compressed sensing with good guarantees to be possible?

\begin{figure*}[t]
\begin{center}
\includegraphics[width=\textwidth]{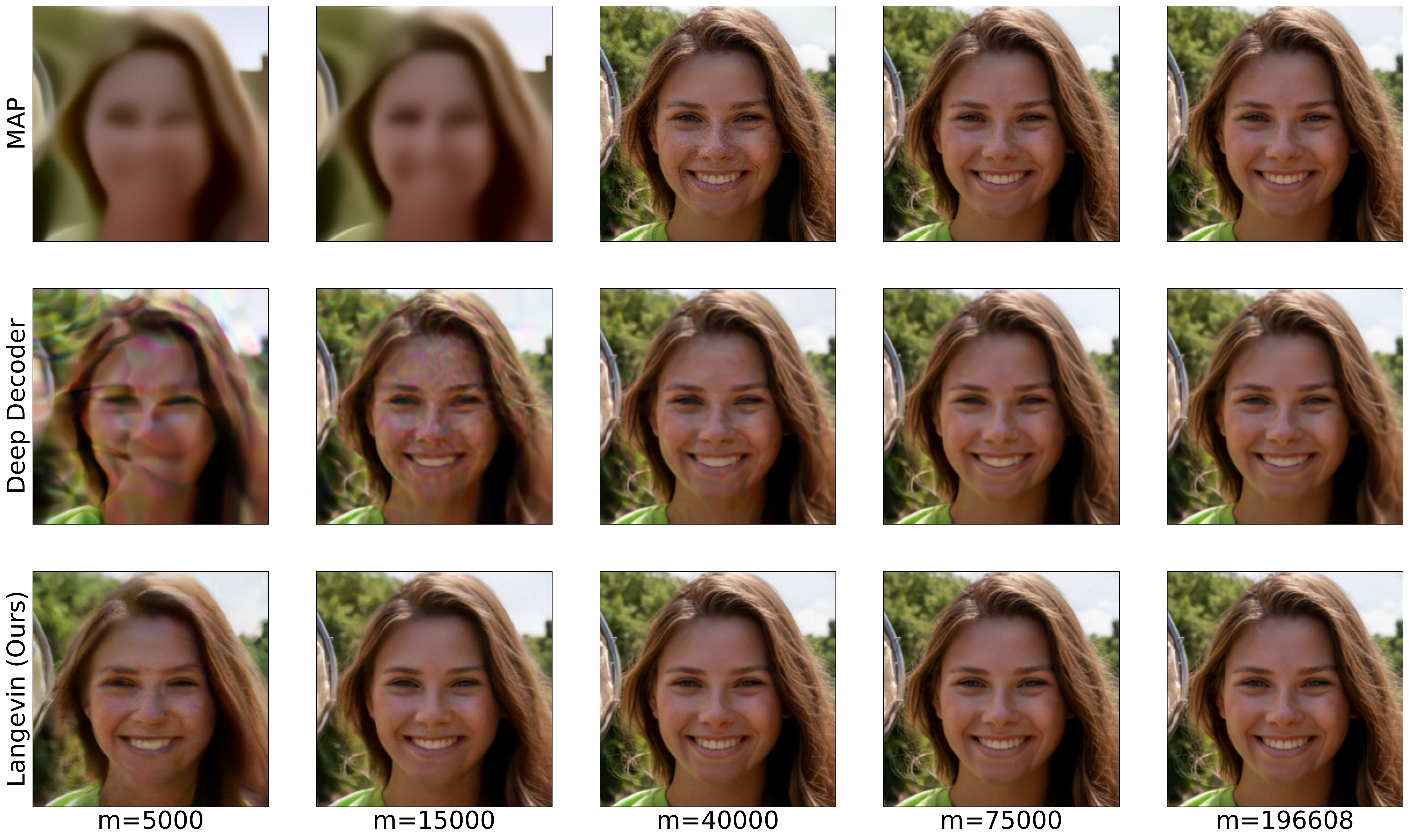}  
\end{center}
\caption{\small Reconstruction results on FFHQ for Gaussian measurements
		(here $n=256\times 256\times 3 = 196,608$ pixels), using an NCSNv2
		model. Each column shows the reconstruction obtained as the number
		of measurements $m$ varies.  The top row shows reconstructions by
		MAP, the middle row shows reconstruction by Deep-Decoder, and the
		bottom row shows reconstructions by Langevin dynamics, which is
		the practical implementation of our proposed posterior
		sampling estimator.}
\label{fig:cs-reconstr-ffhq}
\end{figure*}

\subsection{Contributions.}

Our main theorem is that posterior sampling is a near optimal
recovery algorithm for \emph{any} distribution.  Moreover, it is
sufficient to learn the distribution in Wasserstein distance.

\begin{theorem}\label{thm:intro}
  Let $R$ be an arbitrary distribution over an $\ell_2$ ball of radius
  $r$. Suppose that there exists an algorithm that uses an \emph{arbitrary
  measurement matrix} $A\in \R^{m \times n}$ with noise level $\sigma$
  and finds a reconstruction $\wh{x}$ such that
  \[
    \norm{x^* - \wh{x}} \lesssim \sigma \text{ with probability } \geq
    1 - \delta.
  \]
  Then posterior sampling (see Definition~\ref{defn: algorithm}) with respect to $R$ using
  $m' \geq O\left( m \log \left( 1 + \frac{m r^2 \norm{A}_\infty^2}{\sigma^2}
    \right) + \log \frac{1}{\delta}\right)$ Gaussian measurements of noise level $\sigma$ will
  output $\wh{x}$ satisfying
    \[
      \norm{x^* - \wh{x}} \lesssim \sigma \text{ with
      probability } \geq 1 - O(\delta).
    \]

    Moreover, the same holds for posterior sampling with respect
    to any distribution $P$ satisfying
    $\cW_p(R,P) \lesssim \sigma \delta^{1/p}$ for some $p \geq 1$.
\end{theorem}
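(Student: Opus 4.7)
The proof decomposes into three parts: a Bayesian coupling lemma establishing that posterior sampling is near-optimal, a measurement-simulation argument converting Gaussian measurements into arbitrary ones, and a Wasserstein transfer step that handles the surrogate prior $P$.

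For the first step, I would prove the following coupling identity: conditional on $(A', y')$, the true signal $x^*$ and a posterior sample $\hat x \sim p_R(\cdot \mid A', y')$ are i.i.d.\ by Bayes' rule. Consequently, for any estimator $f(A', y')$ with $\|f(A', y') - x^*\| \leq C\sigma$ with probability $\geq 1 - \delta$ over $x^* \sim R$ and the noise, the triangle inequality yields $\|\hat x - x^*\| \leq 2 C \sigma$ with probability $\geq 1 - 2\delta$. This reduces the theorem to exhibiting \emph{some} estimator achieving $O(\sigma)$ error from the $m'$ Gaussian measurements.

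The main obstacle, which occupies the second step, is producing such an estimator by converting Gaussian measurements into the arbitrary measurement model assumed by $g$. The idea is to find a linear sketch $B y'$ that approximates $A x^* + \tilde \xi$ with $\tilde \xi$ of the right variance; when $m' \geq n$ one can take $B = A (A')^+$ for an exact simulation, but in the interesting regime $m' \ll n$ this is infeasible and I would instead use a covering argument. The set $\{A x : x \in B_r(0)\}$ lies in an $m$-dimensional ball of radius $O(r \|A\|_\infty \sqrt{m})$ and admits an $(\sigma/\sqrt{m})$-net of cardinality $\exp(O(m \log(1 + m r^2 \|A\|_\infty^2 / \sigma^2)))$. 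Standard subgaussian concentration then shows that $m' = \Omega(m \log(1 + m r^2 \|A\|_\infty^2/\sigma^2) + \log(1/\delta))$ Gaussian measurements suffice, with probability $\geq 1 - \delta$, to select the correct net element and produce a sketched measurement on which $g$ can be invoked; combined with the first step, this proves the first claim.

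For the Wasserstein extension, let $\pi$ be an optimal coupling of $R$ and $P$ with $\E_\pi \|X - X'\|^p \leq (\sigma \delta^{1/p})^p$; Markov's inequality gives $\|X - X'\| \leq \sigma$ with probability $\geq 1 - \delta$. Applying the first two steps with the prior $P$ shows that posterior sampling under $P$ on measurements of a signal $X' \sim P$ returns an estimate within $O(\sigma)$ of $X'$ with probability $\geq 1 - O(\delta)$. To transfer this to the actual signal $x^* = X \sim R$, I would couple the observed measurement $y' = A' X + \xi'$ with the hypothetical $\tilde y = A' X' + \xi'$; Gaussian concentration gives $\|y' - \tilde y\| = \|A'(X - X')\| \lesssim \sigma$ with probability $\geq 1 - O(\delta)$. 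The delicate remaining task is coupling a sample $\hat x \sim p_P(\cdot \mid A', y')$ with $\hat x' \sim p_P(\cdot \mid A', \tilde y)$ to within $O(\sigma)$; a direct KL argument between posteriors is too weak, so I would use a maximal coupling of the two Gaussian likelihoods, exploiting that $\|y' - \tilde y\|$ is on the same scale as the natural noise.
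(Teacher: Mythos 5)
Your Step 1 (posterior sampling is 2-competitive with any estimator on the \emph{same} data, because $x^*$ and $\hat{x}$ are conditionally i.i.d.\ given $(A',y')$) is correct, and it is a genuinely different reduction from the paper's: the paper never uses this exchangeability trick. Your Step 3 is also broadly in the spirit of the paper's handling of model mismatch (the paper's Lemma~\ref{lemma: rp winf} uses a likelihood-ratio/change-of-measure bound between the Gaussian likelihoods of nearby signals rather than a maximal coupling, but the idea is comparable). The problem is Step 2, which is where the entire difficulty of the theorem lives, and which does not work as described.

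The gap is this: you cannot ``select the correct net element'' of $\{Ax : x \in \cB_r(0)\}$ from $m' \ll n$ Gaussian measurements of $x^*$. The net lives in the $m$-dimensional image space, but the measurements live in signal space, and the preimage slabs $A^{-1}(\text{cell}) \cap \cB_r(0)$ of distinct net cells are not separated in $\R^n$: two signals $x_1,x_2$ with $\norm{Ax_1 - Ax_2} \geq \sigma/\sqrt{m}$ need only satisfy $\norm{x_1-x_2} \geq \sigma/(\sqrt{m}\,\norm{A}_{op})$, which can be far below the noise level $\sigma$, so no test based on $A'x^*+\xi'$ can distinguish them; subgaussian concentration plus a union bound over the net does not address this identifiability failure. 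Moreover, the rows of a fixed arbitrary $A$ are generically almost orthogonal to the row space of a random $m'\times n$ Gaussian $A'$ when $m' \ll n$, so $Ax^*$ simply is not a (stable) function of $A'x^*$ without exploiting the prior --- and exploiting the prior to localize $x^*$ well enough to compute $Ax^*$ is circular. Finally, even granting an approximate reconstruction of $Ax^*$, the hypothesized algorithm $g$ is only guaranteed to succeed when its input is distributed exactly as $Ax^*+\cN(0,\tfrac{\sigma^2}{m}I_m)$ with $x^*\sim R$; feeding it a quantized, signal-dependent perturbation of that input voids its guarantee. The paper closes this gap by routing the transfer through the approximate covering number: a Fano-type argument (Lemmas~\ref{lemma: MI}, \ref{lemma: DPI}, \ref{lem: covering continuous}) shows the mere \emph{existence} of the $m$-measurement algorithm forces $\log\cov_{O(\sigma),O(\delta)}(R) \lesssim m\log\bigl(1+\tfrac{mr^2\norm{A}_\infty^2}{\sigma^2}\bigr)+\log\tfrac{1}{\delta}$, and a separate argument (Theorem~\ref{thm: main}, via total-variation separation of the projected covering balls) shows posterior sampling succeeds with $O(\log\cov)$ Gaussian measurements. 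Some such instance-specific certificate extracted from the hypothesized algorithm --- rather than a direct simulation of $A$ by $A'$ --- appears to be unavoidable.
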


This theorem comprises three main contributions: the introduction of
posterior sampling as \emph{a new algorithm} for recovery with a
generative prior; an \emph{upper bound} on the sample complexity of
the algorithm in terms of an approximate covering number that we
introduce; and an \emph{instance-optimal lower bound} in terms of the
same approximate covering number that (unlike previous lower bounds in
compressed sensing) applies to \emph{any} distribution of input
signals.

\begin{figure*}[t]
\begin{center}
  \includegraphics[width=0.9\textwidth]{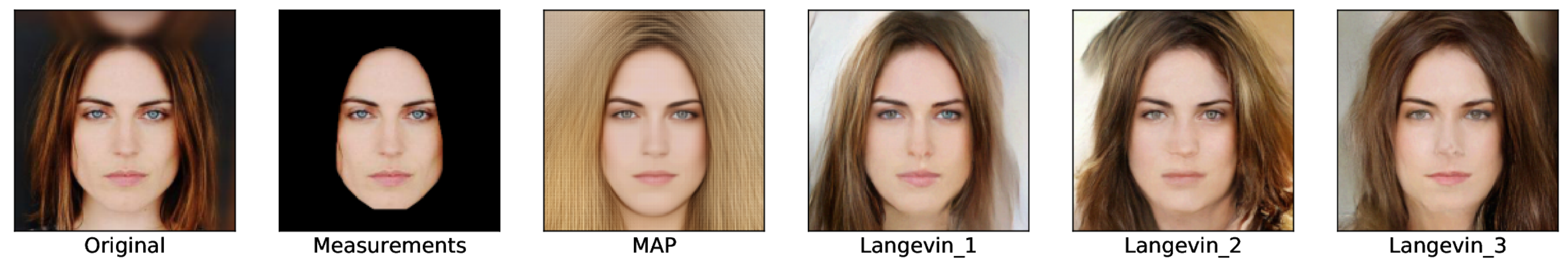}
\end{center}
\caption{\small Reconstruction results for inpainting on CelebA-HQ using
		Glow. The first column shows the original image, second column
		shows the measurements by removing the hair and background, the
		third column shows reconstruction by MAP, and the last three
		columns show samples from posterior sampling via Langevin
		dynamics. MAP produces the
		same washed out image all the time, whereas posterior sampling
		produces images with diversity.}
\label{fig:inpaint-hair}
\end{figure*}

\paragraph{Contribution 1: Approximate covering numbers.}
The covering number of a set is the smallest number of balls that can
cover the entire set.  Standard compressed sensing is closely tied to
the covering number $N_\eta(S)$ of the set $S$ of possible signals
$x$; for example, the set of unit-norm $k$-sparse vectors has $\log
N_\eta = \Theta(k \log \frac{n}{k})$, which is precisely why Cand\'es,
Romberg, and Tao use this many linear measurements to
achieve~\eqref{eq:l2l2}.

 For distributions, we need a different concept of covering number.
 As a motivating example, consider a distribution $R$ induced by a
 trivial \emph{linear} generative model, $ x = \Sigma z $ where $z
 \sim \cN(0, I_n)$ and $\Sigma$ is a fixed $n \times n$ matrix.
 Further suppose the singular values $\sigma_i$ of $\Sigma$ are
 Zipfian, so $\sigma_i = 1/i$.  In this case, $R$'s support is $\R^n$,
 so covering the entire support of $R$ is infeasible.  Instead we
 could denote by $\cov_{\eta, 0.01}(R)$ the minimum number of
 $\eta-$radius balls needed to cover 99\% of $R$. An elementary
 calculation shows
\[
  \log \cov_{\eta, 0.01}(R) = \Theta(1/\eta^2),
\]
which is (up to constants) precisely the number of linear measurements
you need to estimate $x$ to within $\eta$.  

We show that an \emph{approximate covering number} characterizes the
measurement complexity of compressed sensing a general distribution
$R$, and that recovery by \emph{posterior sampling} achieves this
bound.

\begin{definition}
  Let $R$ be a distribution on $\R^n$. For some parameters $\eta > 0,
  \delta\in \left[ 0,1 \right],$ we define the
  \emph{$(\eta,\delta)$-approximate covering number} of $R$ as
  \[
    \cov_{\eta,\delta}(R) := \min\left\{ k :  R\left[ \cup_{i=1}^k \cB(x_i,\eta)  \right]\geq 1 - \delta, x_i \in \R^n  \right\},
  \]
  where $\cB(x, \eta)$ is the $\ell_2$ ball of radius $\eta$ centered at $x$.
\end{definition}\label{defn: cov}

When $\delta = 0$, this is $N_\eta(\supp R)$, the standard covering
number of the support of $R$.  Having $\delta > 0$ allows meaningful
results for full-support distributions that are concentrated on
smaller sets. This also generalizes our previous results
in~\cite{bora2017compressed}, which depend on the covering numbers of low-dimensional
generative models.

\paragraph{Contribution 2: Recovery algorithm.}
The recovery algorithm we consider is posterior sampling:
\begin{definition}\label{defn: algorithm}
  Given an observation $y$, the \emph{posterior
  sampling} recovery algorithm with respect to $P$ outputs $\wh{x}$
  according to the posterior distribution $P(\cdot \mid y)$.
\end{definition}

\paragraph{Contribution 3: Sample complexity upper bound.}
Our main positive result is that posterior sampling achieves the
guarantees of equation~\eqref{eq:l2l2} for \emph{general}
distributions $R$, with $O(\log \cov_{\sigma,\delta}(R))$ measurements.
Not only this, but the algorithm is robust to model mismatch:
posterior sampling with respect to $P \neq R$ still works, as long
as $P$ and $R$ are close in Wasserstein distance:
\begin{theorem}[Upper bound]\label{thm:upperinformal}
  Let $P$, $R$ be distributions with $\cW_1(P, R) \leq \sigma$.  Let
  $x^* \sim R$, let $y$ be Gaussian measurements with noise level
  $\sigma$,  and let $\wh{x} \sim P(\cdot | y)$.  For any $\eta \geq
  \sigma$, with
  \[
    m \geq O(\log \cov_{\eta, 0.01}(R))
  \]
  measurements, the guarantee $ \norm{\wh{x} - x^*} \leq C \eta $
  is satisfied for some universal constant $C$ with $97\%$ probability
  over the signal $x$, measurement matrix $A$, noise $\xi$, and
  recovery algorithm $\wh{x}$.
\end{theorem}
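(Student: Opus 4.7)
The plan is to first prove the upper bound in the case $P = R$ by combining a covering argument with Johnson--Lindenstrauss-style concentration of the Gaussian sensing matrix on the cover, and then to absorb the Wasserstein mismatch $\cW_1(P,R)\leq\sigma$ into an effective noise term via coupling.

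\textbf{Step 1 (Covering and matrix concentration).} Fix $N := \cov_{\eta, 0.01}(R)$ and centers $x_1, \ldots, x_N$ with $R(\bigcup_i \cB(x_i, \eta)) \geq 0.99$, so that with probability $\geq 0.99$ over $x^* \sim R$ there is an $i^*$ with $\|x^* - x_{i^*}\| \leq \eta$. Standard Gaussian concentration on each of the $\binom{N}{2}$ pair differences $x_i - x_j$, together with a union bound, shows that taking $m = \Theta(\log N)$ yields
\[
  \tfrac{1}{2}\|x_i - x_j\| \leq \|A(x_i - x_j)\| \leq 2\|x_i - x_j\| \quad \text{for all } i,j,
\]
with high probability, and independently $\|\xi\| \lesssim \sigma$ with high probability.

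\textbf{Step 2 (Posterior concentration when $P = R$).} Conditional on $y$, the true signal $x^*$ and the posterior sample $\wh{x}$ are i.i.d.\ from $R(\cdot \mid y)$, so it suffices to bound the diameter of the bulk of this measure. Writing $y - Ax_{i^*} = \xi + A(x^* - x_{i^*})$, one has $\|y - Ax_{i^*}\| \lesssim \eta$, and for any center $x_j$ with $\|x_j - x_{i^*}\| \geq C\eta$,
\[
  \|y - Ax_j\|^2 - \|y - Ax_{i^*}\|^2 = \|A(x_j - x_{i^*})\|^2 + 2\langle y - Ax_{i^*},\, A(x_{i^*} - x_j)\rangle.
\]
By Step 1 the first term is $\Omega(\|x_j - x_{i^*}\|^2)$ while Cauchy--Schwarz bounds the second by $O(\eta\,\|x_j - x_{i^*}\|)$, so for $C$ large enough the quadratic dominates and the likelihood of any point in $\cB(x_j, \eta)$ is suppressed by $\exp(-\Omega(m \eta^2/\sigma^2)) \leq \exp(-\Omega(m))$ relative to $\cB(x_{i^*}, \eta)$. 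With $m = \Omega(\log N)$ this beats the union bound over the wrong cover balls, forcing $R(\cdot \mid y)$ to place $1-o(1)$ mass inside $\bigcup_{\|x_j - x_{i^*}\| \lesssim \eta} \cB(x_j, \eta)$, and hence $\|\wh{x} - x^*\| = O(\eta)$ with high probability.

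\textbf{Step 3 (Wasserstein extension).} Given an optimal coupling $(X^*, \widetilde X)$ with $X^* \sim R$, $\widetilde X \sim P$, and $\E\|X^* - \widetilde X\| \leq \sigma$, Markov gives $\|X^* - \widetilde X\| = O(\sigma)$ with probability $\geq 0.99$. Writing $y = AX^* + \xi = A\widetilde X + \xi'$ with $\xi' := \xi + A(X^* - \widetilde X)$, the concentration of $A$ on the fixed vector $X^* - \widetilde X$ gives $\|\xi'\| = O(\sigma)$ w.h.p., so from the viewpoint of $P$ we are observing a noisy measurement of $\widetilde X \sim P$ at noise level $O(\sigma)$. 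The $\eta$-cover of $R$ together with the coupling produces an $O(\eta)$-cover of $P$ of the same cardinality $N$ (ball centers shifted by the coupling), so Step 2 applied to $P$ gives $\|\wh{x} - \widetilde X\| = O(\eta)$, and the triangle inequality through $\widetilde X$ closes the argument.

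\textbf{Main obstacle.} The crux is Step 2: although the centers form a finite set on which matrix concentration is easy, the posterior is a continuous measure on $\R^n$, so the likelihood comparison must be integrated against the prior over each ball and must also dispose of the $\leq 0.01$ prior mass that lies outside every cover ball (and could in principle concentrate after conditioning on $y$). The cleanest handle I see is exchangeability: since $(x^*, \wh{x})$ are conditionally i.i.d.\ given $y$, bounding $\Pr[\|x^* - \wh{x}\| > C\eta]$ reduces to bounding the probability that two independent $R$-samples are jointly plausible under the same noisy linear measurement, which is exactly what the likelihood/JL computation above controls.
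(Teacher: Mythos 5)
Your overall architecture (cover the distribution, Gaussian concentration on the cover, and the observation that $x^*$ and $\wh x$ are conditionally i.i.d.\ given $y$) matches the paper's, but the crux of Step~2 is not actually carried out, and the obstacle you flag at the end is exactly where the real work lies. The posterior mass of a wrong ball $\cB(x_j,\eta)$ is (prior mass of the ball) times (likelihood), and the prior masses $R(\cB(x_j,\eta))$ and $R(\cB(x_{i^*},\eta))$ can differ by factors far exceeding $e^{\Omega(m)}$ (the true ball may carry mass $e^{-m^2}$, say), so the pointwise likelihood suppression you derive does \emph{not} force $R(\cdot\mid y)$ to concentrate near $x_{i^*}$; conditioned on $x^*$ landing in a very light ball, the posterior really can concentrate on the wrong ball. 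The statement that is true, and suffices, is a bound on the \emph{joint} probability $\Pr[x^*\in\cB(x_i,\eta),\ \wh x\in\cB(x_j,\eta)]$, in which the small prior weight of the true ball reappears as a factor and cancels the bad conditional behaviour. This is precisely the paper's Lemma~\ref{lemma: mixture tv}: for a two-component mixture, $\Pr[z^*=0,\wh z=1]=\int \tfrac{(1-c)h_0\, c h_1}{(1-c)h_0+ch_1}\le\int\min\{h_0,h_1\}=1-TV(H_0,H_1)$, a bound from which the mixture weights drop out entirely; combined with the TV lower bound of Lemma~\ref{lemma: noisy pout tv improved} and a union bound over the cover, this closes Step~2. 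Your closing ``exchangeability'' remark points in this direction, but the estimate is never made. The residual $0.01$ mass is handled the same way: on the $\wh x$ side it sits in the ``far'' component and is killed by the same TV bound; on the $x^*$ side it adds $0.01$ to the failure probability.

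Step~3 has a second gap: $\xi'=\xi+A(X^*-\widetilde X)$ is neither Gaussian nor independent of $\widetilde X$, and the algorithm's posterior $P(\cdot\mid y)$ is computed under the nominal noise model, so you cannot simply ``apply Step~2 to $P$ at noise level $O(\sigma)$.'' The paper's Lemma~\ref{lemma: rp winf} instead performs an explicit change of measure, bounding the ratio of the density of $y$ given $x^*$ to that given the coupled $z^*$ by $e^{O(m\varepsilon(\varepsilon+\sigma)/\sigma^2)}$; this factor is only absorbed by the $e^{-\Omega(m)}$ success bound when the coupling distance $\varepsilon$ is $O(\sigma)$ almost surely, not merely with probability $0.99$, which is why the paper first converts the $\cW_1$ bound into a $\cW_\infty$ bound on trimmed sub-distributions (Lemma~\ref{lemma: wk implies winf}) before changing measure. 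Your Markov step supplies the right trimming, but the subsequent likelihood-ratio argument still needs to be made.
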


\paragraph{Contribution 4: Sample complexity lower bound.}
Our second main result lower bounds the sample complexity for  
\emph{any} distribjution. This is, to our knowledge,
the first lower bound for compressed sensing that applies to arbitrary
distributions $R$.  Most lower bounds in the area are minimax, and
only apply to specific ``hard'' distributions
$R$~\cite{price20111+,candes2013well,iwen2010adaptive}; the closest
result we are aware of is~\cite{aeron2010information}, which
characterizes product distributions.

\begin{theorem}[Lower bound]\label{thm:lowerinformal}
  Let $R$ be any distribution over an $\ell_2$ ball of radius $r$, and
  consider any method to achieve $\norm{\wh{x} - x^*} \leq \eta$ with
  $99\%$ probability, using an \emph{arbitrary measurement matrix} $A
  \in \R^{m \times n}$ with noise level $\sigma$. This must have
  \[
    m \geq \frac{C'}{\log(1 + \frac{m r^2 \norm{A}_\infty^2}{\sigma^2} )} \log \cov_{C' \eta, 0.04} (R).
  \]
  for some constant $C' > 0$.
\end{theorem}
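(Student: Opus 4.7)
The plan is an information-theoretic lower bound obtained by combining an upper bound on the Gaussian channel capacity $I(x^*;y)$ with a Fano-style lower bound on the same quantity in terms of $\log \cov_{C'\eta,0.04}(R)$. Solving the resulting inequality for $m$ yields the claim.

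For the capacity upper bound, since $y=Ax^*+\xi$ with $\xi\sim\cN(0,(\sigma^2/m)I_m)$ independent of $x^*$, subadditivity of differential entropy combined with the Gaussian maximum-entropy inequality gives
\[
  I(x^*;y) \;=\; h(y) - h(\xi) \;\leq\; \sum_{i=1}^m \bigl[h(y_i)-h(\xi_i)\bigr] \;\leq\; \frac{m}{2}\log\!\left(1+\frac{mr^2\|A\|_\infty^2}{\sigma^2}\right),
\]
using the bound $\mathrm{Var}(y_i)\leq \|A_i\|^2 r^2 + \sigma^2/m \leq \|A\|_\infty^2 r^2 + \sigma^2/m$, reading $\|A\|_\infty$ as the largest row norm.

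For the Fano lower bound, let $N=\cov_{C'\eta,0.04}(R)$ for a large constant $C'$ to be chosen. Starting from a minimal $(C'\eta,0.04)$-cover $\{c_1,\dots,c_N\}$, I would greedily extract a $C'\eta$-packing $\{p_1,\dots,p_M\}$ of cover centers: every discarded $c_i$ lies within $C'\eta$ of some kept $p_j$, so the balls $\cB(p_j,2C'\eta)$ still cover at least $96\%$ of the mass, yielding $M\geq \cov_{2C'\eta,0.04}(R)$. I would then bucket the masses $\mu_j := R(\mathrm{Voronoi}(p_j)\cap \cB(p_j,2C'\eta))$ into $O(\log M)$ dyadic levels and restrict attention to the heaviest level, obtaining a balanced sub-packing of size $M'=\Omega(M/\log M)$ on which the induced index $Z(x^*)$ is approximately uniform, so that $H(Z)=\Omega(\log M')$. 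The decoder $\hat Z(\hat x)$ first rounds $\hat x$ to its nearest cover center and then maps this to the associated packing representative. With $C'$ chosen large enough that the recovery radius $\eta$ is a small fraction of the packing radius, a union bound over the $99\%$ recovery event, the $96\%$ near-packing event, and an exclusion of the near-Voronoi-boundary strip (whose mass is small by taking $C'$ large) yields $\Pr[\hat Z\neq Z]\leq 0.05$. Fano's inequality together with the data-processing inequality then gives
\[
  I(x^*;y) \;\geq\; I(Z;y) \;\geq\; H(Z) - h_b(0.05) - 0.05\log M' \;=\; \Omega(\log M') \;=\; \Omega(\log N).
\]

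Chaining the two bounds produces $\log \cov_{C'\eta,0.04}(R) \lesssim m\log\!\left(1+mr^2\|A\|_\infty^2/\sigma^2\right)$, which rearranges to the inequality in the statement after absorbing constants into $C'$. The main obstacle is step~(Fano): a minimal cover may distribute its mass very unevenly, and a naive greedy packing may leave $x^*$ up to $2C'\eta$ away from its assigned representative, so the rounding-based decoder is not automatically reliable. The technical work is choosing the separation, the Voronoi-boundary exclusion, and the dyadic bucketing so that the decoder is correct with constant probability while losing only a constant factor in $\log M$; once this geometric construction is in place, the Fano and Gaussian capacity calculations are standard.
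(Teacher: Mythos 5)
Your overall architecture (Gaussian channel capacity bound on $I(x^*;y)$, data processing, and a Fano-type inequality relating $I(x^*;\wh{x})$ to an approximate covering number) is the same as the paper's, and your capacity calculation matches Lemma~\ref{lemma: MI}. The gap is in the Fano step, and it is not merely "technical work": the specific construction you propose breaks in two places. First, the decoder. Your packing has separation $C'\eta$ but the cells have radius $2C'\eta$, so a point legitimately assigned to $p_j$ can be strictly closer to some other $p_{j'}$ even before adding the recovery error $\eta$; nearest-representative rounding therefore fails even under exact recovery. Your proposed fix --- excluding a near-Voronoi-boundary strip "whose mass is small by taking $C'$ large" --- is false for an adversarial prior: $R$ may concentrate all of a cell's mass on the bisecting hyperplane between two representatives, and no choice of $C'$ changes this, because increasing $C'$ rescales the whole picture rather than pushing mass away from boundaries. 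Enlarging the packing separation relative to the cover radius to repair the decoder destroys the lower bound $M \geq \cov_{2C'\eta,0.04}(R)$, since a well-separated subset of an $\eta$-cover can consist of a single point.

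Second, the uniformization. The heaviest dyadic level need not contain $\Omega(M/\log M)$ cells (it may be a single cell carrying most of the mass), and --- more fundamentally --- whatever sub-collection you condition on must have \emph{constant} total $R$-mass, because the $99\%$ recovery guarantee is average-case over $R$: conditioning on an event of mass $p$ only gives conditional failure probability $0.01/p$. A dyadic level with $\Omega(M/\log M)$ cells can have total mass $o(1)$, on which the algorithm is permitted to fail always, so Fano yields nothing there; conversely a constant-mass level can have $O(1)$ cells. The paper's Lemma~\ref{lem: covering continuous} sidesteps both problems by arguing from the estimator's side: conditioned on success and on $\wh{x}=v$, the signal lies in $B(v,\eta)$, and a Markov argument shows that for most $v$ the conditional entropy --- hence the number of support points in $B(v,\eta)$ --- is large, which by pigeonhole bounds the covering number of a constant fraction of the support without ever building a packing. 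Non-uniformity is handled exactly (not approximately) by lifting $x$ to $(x,j)$ with a quantized auxiliary index so the lifted variable is uniform, and the residual-mass accounting is done through the explicit $\tau$ parameter together with the counting argument in the proof of Theorem~\ref{thm: lower} (the $\log 6\delta$ term). If you want to salvage a packing-style argument, you would at minimum need to (i) restrict to the tail cells beyond the top $N-1$ by mass, which provably carry mass $\geq 0.04$ with each cell of mass $O(1/N)$, and merge them into $\Theta(N)$ groups of mass $\Theta(1/N)$ to get near-uniformity at constant total mass, and (ii) replace nearest-point decoding with an argument that tolerates boundary mass --- and step (ii) is precisely where the paper's estimator-side argument is doing the real work.
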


Note that Theorem~\ref{thm:upperinformal} and~\ref{thm:lowerinformal}
directly give Theorem~\ref{thm:intro}.
For more precisely stated and general versions of these results,
including dependence on the failure probability $\delta$, see
Theorems~\ref{thm: main} and~\ref{thm: lower}.

\subsection{Related Work}

Generative priors have shown great promise in compressed sensing and
other inverse problems, starting with~\cite{bora2017compressed}, who
generalized the theoretical framework of compressive sensing and
restricted eigenvalue
conditions~\cite{tibshirani1996regression,donoho2006compressed,bickel2009simultaneous,candes2008restricted,
hegde2008random,baraniuk2009random,baraniuk2010model,eldar2009robust}
for signals lying on the range of a deep generative
model~\cite{goodfellow2014generative, kingma2013auto}. 

Lower bounds
in~\cite{kamath2019lower,liu2019information,jalali2019solving}
established that the sample complexities in~\cite{bora2017compressed}
are order optimal.  The approach in~\cite{bora2017compressed} has been
generalized to tackle different inverse problems such as robust
compressed sensing~\cite{jalal2020robust}, phase
retrieval~\cite{hand2018phase,aubin2019exact,jagatap2019phase}, blind
image deconvolution~\cite{asim2018blind}, seismic
inversion~\cite{mosser2020stochastic}, one-bit
recovery~\cite{qiu2019robust,liu2020sample}, and blind
demodulation~\cite{hand2019global}.  Alternate algorithms for
reconstruction include sparse deviations from generative
models~\cite{dhar2018modeling}, task-aware compressed sensing~\cite{
kabkab2018task}, PnP~\cite{pandit2019inference, fletcher2018inference,
fletcher2018plug}, iterative projections~\cite{mardani2018deep},
OneNet~\cite{rick2017one} and Deep Decoder~\cite{heckel2018deep,
heckel2020compressive}.  The complexity of optimization algorithms
using generative models have been analyzed for
ADMM~\cite{gomez2019fast}, PGD ~\cite{hegde2018algorithmic},
layer-wise inversion~\cite{lei2019inverting}, and gradient
descent~\cite{hand2017global}.  Experimental results
in~\cite{asim2019invertible, whang2020compressed,
lindgren2020conditional} show that invertible models have superior
performance in comparison to low dimensional models.
See~\cite{ongie2020deep} for a more detailed survey on deep learning
techniques for compressed sensing.  A related line of work has
explored learning-based approaches to tackle classical problems in
algorithms and signal
processing~\cite{aamand2019learned,indyk2019learning,
metzler2017learned, hsu2018learning}.

Lower bounds for $\ell_2/\ell_2$ recovery of sparse vectors can be
found in~\cite{scarlett2016limits,price20111+, aeron2010information,
iwen2010adaptive, candes2013well}, and these are related to the lower
bound in~\eqref{thm:lowerinformal}.  The closest result is that
of~\cite{aeron2010information}, which characterizes the probability of
error and $\ell_2$ error of the reconstruction via covering numbers of
the probability distribution. Their approach uses the rate distortion
function of a scalar random variable $\bx$, and provides guarantees
for the product measure generated via an i.i.d. sequence of $\bx$.  A
Shannon theory for compressed sensing was pioneered
by~\cite{wu2012optimal,wu2011shannon}.  The $\delta-$Minkowski
dimension of a probability measure used
in~\cite{wu2012optimal,wu2011shannon,pesin2008dimension} can be
derived from our $(\varepsilon,\delta)-$covering number by taking the
limit $\varepsilon\to 0$.  \cite{reeves2012sampling} contains a
related theory of rate distortion for compressed sensing.  There is
also related work in the statistical physics community under different
assumptions on the signal
structure~\cite{zdeborova2016statistical,barbier2019optimal}.

\section{Background and Notation}

In this section, we introduce a few concepts that we will use
throughout the paper. $\norm{\cdot}$ refers to the $\ell_2$ norm
unless specified otherwise.  The metric we use to quantify the
similarity between distributions is the Wassertein distance. For two
probability distributions $\mu, \nu$ supported on $\Omega$, and for
any $p\geq 1$, the Wasserstein-$p$~\cite{villani2008optimal,
arjovsky2017wasserstein} and
Wasserstein-$\infty$~\cite{champion2008wasserstein} distances are
defined as:

{\small
\begin{align*}
  \cW_p ( \mu, \nu) &:= \inf_{\gamma \in \Pi(\mu,\nu)} \left( \E_{(u,v)
  \sim \gamma} \left[ \norm{ u - v}^p \right] \right)^{1/p}, \\
  \cW_{\infty}(\mu,\nu) &:= \inf_{\gamma \in \Pi(\mu,\nu)} \left(
    \underset{{(u,v) \in \Omega^2}}{\gamma\text{-}\esssup} \norm{u - v}
\right),
\end{align*}
} 
where $\Pi(\mu,\nu)$ denotes the set of joint distributions whose
marginals are $\mu,\nu$.
The above definition says that if $\cW_{\infty}(\mu,\nu)\leq \eps$,
and $(u,v) \sim \gamma$, then $\norm{u-v} \leq \eps$ almost surely.

We say that $y$ is generated from $x^*$ by a Gaussian measurement
process with $m$ measurements and noise level $\sigma$, if $y = Ax^* +
\xi$ where $\xi \sim \mathcal{N}(0, \frac{\sigma^2}{m} I_m)$ and
$A \in \mathbb{R}^{m \times n}$ with $A_{ij} \sim \mathcal{N}(0,
1/m)$.

\section{Upper Bound}

\subsection{Two-Ball Case} \label{exmp: two ball example}

For simplicity, we will first demonstrate our proof techniques in the
simple setting where $R=P$, the measurements are noiseless, and the
ground truth distribution $P$ is supported on two disjoint balls
(illustrated in Figure~\ref{fig:two ball example}). In this example,
two $\eta$ radius balls can cover the whole space, so the parameters
in Theorem~\ref{thm:upperinformal} will be $\sigma=0$ and
$\cov_{\eta,0}(P)=2$. Applying Theorem~\ref{thm:upperinformal} on $P$
tells us that a constant number of measurements is sufficient for
posterior sampling to get $O(\eta)$-close to the ground truth, i.e.,
to return an element of the correct ball. We will now prove this
claim.

  Let $B_{0}, B_{\xtilde}$ denote $\eta$-radius balls centered at $0, \xtilde \in \R^n$ respectively.
  Suppose $P = 0.5 P_{0} + 0.5 P_1$, where $P_{0}, P_1$, are uniform distributions on $B_{0}, B_{\xtilde}$.
  The centers of the balls are separated by a distance $d \gg \eta$.

  The ground truth $x^*$ will be sampled from $P$.  For a fixed matrix $A \in \R^{m\times n}$ with $m \ll n$, let the noiseless measurements be $y = A x^*$ and let $H_{0}, H_1,$ denote the distributions over $\R^m$ induced by the projection of $P_{0}, P_1,$ by $A$.
  
  Given $ A , y$, we sample the reconstruction ($\xhat$) according to
  the posterior density 
  \[
  p( \xhat | y) = c_{y} p_{0} ( \xhat | y) + (1-c_{y}) p_{\xtilde} ( \xhat | y),
  \]
  where $c_{y}$ is the posterior probability that $y$ is a projection of $x^*$ drawn from the $P_{0}$ component of $P$. Note that  $c_{y}$ depends on $y$.
  
\begin{figure}[tb]
  \begin{center}
  \includegraphics[scale=0.5]{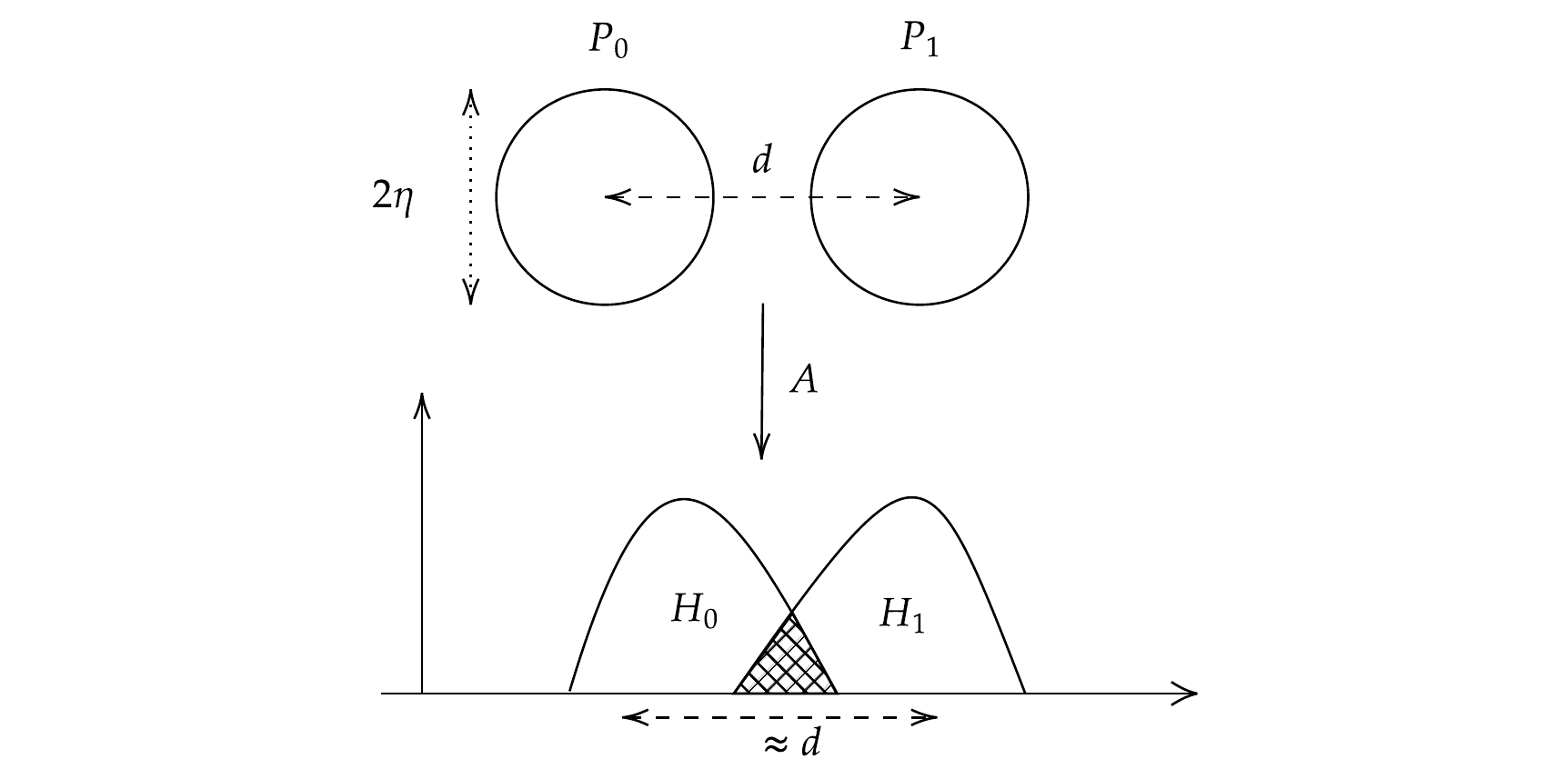}
  \end{center}

  \caption{\small Illustrative example for the upper bound. The signal
          $x^*$ is drawn from a mixture of two well-separated balls.
          The observations $y = Ax^*$ are then drawn from a mixture of
          two distributions $H_0, H_1$ that may overlap.  The
          probability that posterior sampling outputs something
          from the wrong ball is proportional to the (shaded) overlap
          between these distributions, which is atmost
          $1 - TV(H_0, H_1)$.}  \label{fig:two ball example}
\end{figure}

  Since the balls $B_0 \& B_{\tilde{x}}$ are well separated, the
  ground truth and the reconstruction are far apart if and only if
  they lie in different balls, i.e.,
  $ \{x^* \in B_{0}, \xhat \in B_{\xtilde}\},$ or vice versa. It turns
  out quite generally that the probability of this event is bounded by how similar the
  distributions $H_{0}, H_1$ are:

\define{lemma: mixture tv}{Lemma}{
  For $c\in [0,1]$, let $H:= (1-c)H_0 + cH_1$ be a mixture of two
  absolutely continuous distributions $H_0,H_1$ admitting densities
  $h_0, h_1$.  Let $y$ be a sample from the distribution $H$,  such
  that  $y|z^* \sim H_{z^*}$ where $z^*\sim Bernoulli(c)$.
  
  Define $\wh{c}_{y} = \frac{ ch_1(y)}{ (1-c) h_0(y) + ch_1(y)}$, and
  let $\wh{z}|y \sim Bernoulli(\wh{c}_{y})$ be the posterior sampling of $z^*$ given $y$.
  Then we have 
  \begin{equation*}
  \Pr_{z^*, y, \wh{z}}[z^* = 0, \wh{z}=1] \leq 1 - TV(H_0, H_1).
  \end{equation*}
}
\state{lemma: mixture tv}

The proof of this, as well as all parts of the upper bound, can be
found in Appendix~\ref{app:upper}.

In our current example, this gives us
  \begin{align*}
    \Pr[ x^* \in B_{0}, \xhat \in B_{\xtilde}] &\leq 1 - TV(H_{0}, H_1) \text{ and  } \\
    \Pr[ x^* \in B_{\xtilde}, \xhat \in B_{0}] &\leq 1 - TV(H_{0}, H_1).
  \end{align*}

  Since $B_{0}$ and $B_{\xtilde}$ are balls of radius $\eta$, a union bound of the above two probabilities gives:
  \begin{align}
    \Pr\left[ \norm{x^* - \xhat} > 2 \eta \right] 
    \leq & \Pr\left[ x^* \in B_{0}, \xhat \in B_{\xtilde} \right] + \nonumber \\
    & \Pr\left[ x^* \in B_{\xtilde} , \xhat \in B_{0} \right], \nonumber\\
    \leq & 2 \left( 1 - TV\left(H_{0}, H_1 \right) \right).~\label{eqn:TV_ub} 
  \end{align}

  If $A$ is a Gaussian random matrix, the Johnson-Lindenstrauss (JL)
  Lemma tells us that it will preserve distances between vectors with
  high probability .  This does not necessarily mean that every point
  in the distribution $P$ will be preserved in norm.  Still, we show
  that, since $P_0$ and $P_1$ have well-separated supports, their
  projected distributions $H_0$ \& $H_1$ have very high TV distance.
  This also holds more generally, between any distribution on a ball
  and any distribution far from the ball and in the presence of noise.

  \define{lemma: noisy pout tv improved}{Lemma}{
    Let $y$ be generated from $x^*$ by a Gaussian measurement process
    with noise level $\sigma$. For a fixed $\xtilde \in \R^n,$ and
    parameters $\eta>0 , c \geq 4e^2$, let $P_{out}$ be a distribution
    supported on the set 
    \[
    S_{\xtilde, out} := \{ x\in \R^n: \norm{x - \xtilde} \geq c(\eta + \sigma )  \}.
    \]
    Let $P_{\xtilde} $ be a distribution which is supported within an
    $\eta-$radius ball centered at $\xtilde$.
    
    For a fixed $A$, let $H_{\tilde{x}}$ denote the distribution of
    $y$ when $x^* \sim P_{\tilde{x}}.$ Let $H_{out}$ denote the
    corresponding distribution of $y$ when $x^* \sim P_{out}.$    
    Then we have:
    \begin{align*}
    \E_A \left[ TV( H_{\xtilde}, H_{out})  \right] \geq 1 - 4
    e^{-\frac{m}{2}\log\left( \frac{c}{4e^2} \right)} .
    \end{align*}
  }
  \state{lemma: noisy pout tv improved}
  By Markov's inequality, the expectation bound also gives a high probability bound over $A$.

  For our current example, the above result implies that with
  probability $ 1 - e^{-\Omega(m )}$ over $A$, we have
  \begin{align}\label{eqn:TV_lb} 
  TV( H_{0}, H_1) & \geq 1 - e^{ - \Omega( m )}.
  \end{align}

  Substituting equation~\eqref{eqn:TV_lb} in equation~\eqref{eqn:TV_ub}, we have
  \begin{align*}
    \Pr\left[ \norm{x^* - \xhat} > 2 \eta \right] & \leq 2 e^{-\Omega\left( m \right)}.
  \end{align*}
  This shows that posterior sampling will produce a reconstruction
  which is close to the ground truth with overwhelmingly high
  probability for the two-ball example.

  \subsection{Going beyond two balls}
  The two-ball example leaves three main questions unanswered:
  \begin{enumerate}[leftmargin=*]
  \item How do we handle distributions over larger collections of balls?
  \item How do we handle mismatch between the distribution of reality
    ($R$) and the model ($P$)?
  \item How do we handle having a $\delta$ probability of lying outside any ball?
  \end{enumerate}

  \paragraph{Unions of many balls.}  The first question is relatively
  easy to answer: if $\cov_{\eta, 0}(R) \leq e^{o(m)}$, you can cover
  $R$ with a small number of balls, and essentially apply
  Lemma~\ref{lemma: noisy pout tv improved} with a union bound.  There
  are a few details (e.g., Lemma~\ref{lemma: noisy pout tv improved}
  shows you will not confuse any ball with faraway balls, but you
  might confuse it with nearby balls) but solving them is
  straightforward.  This shows that, if $P = R$ and
  $\log \cov_{\eta,0}(R)$ is bounded, then posterior sampling
  works well with $1 - e^{-\Omega(m)}$ probability.

    \define{lemma: wk implies winf}{Lemma}{%
      Let $R,P$ be arbitrary distributions on $\R^n$.
      Let $p \geq 1$ and $\eta, \rho, \delta>0,$ be parameters.

      If $\cW_p(R,P) \leq \rho$ and $\min \{ \logcov_{\eta,\delta}(P), \logcov_{\eta, \delta}(R) \} \leq k$, then there exist distributions $R', R'', P', P'',$ and a finite discrete distirbution $Q$ with $|\supp(Q)| \leq e^{k}$ satisfying:
      \begin{enumerate}
      \item $\min \left\{ \cW_{\infty}(P',Q), \cW_{\infty}(R',Q)  \right\} \leq \eta$,
      \item $\cW_{\infty}(R',P') \leq \frac{\rho}{\delta^{1/p}}$,
      \item $P = (1-2\delta)P' + (2\delta) P''$ and $R = (1-2\delta)R' + (2\delta) R''$
      \end{enumerate}
    }

    \paragraph{Distribution mismatch in $\cW_\infty$.}
    The above assumes we resample with respect to the true
    distribution $R$.  But we only have a learned estimate $P$ of $R$.
    We would like to show that observing samples from $R$ and
    resampling according to $P$ gives good results.  We first show
    that resampling signals drawn from $R$ with respect to $P$ is not
    much worse than resampling signals drawn from $P$ with respect to
    $P$, if $P$ and $R$ are close in $\cW_\infty$.

    \define{lemma: rp winf}{Lemma}{%
      Let $R,P,$ denote arbitrary distributions over $\R^n$ such
      that $\cW_\infty(R,P) \leq \varepsilon$.

      Let $x^* \sim R$ and $z^* \sim P$ and let $y$ and $u$ be
      generated from $x^*$ and $z^*$ via a Gaussian measurement
      process with $m$ measurements and noise level $\sigma$. Let
      $\xhat \sim P( \cdot | y, A)$ and $\zhat \sim P( \cdot | u, A)$.
      For any $d>0$, we have
      \begin{align*}
        &\Pr_{x^* , A , \xi, \xhat } \left[ \norm{x^* - \xhat} \geq d
        + \varepsilon \right] \leq \\
        & e^{-\Omega(m)} + e^{\left( \frac{4\varepsilon\left(
        \varepsilon + 2\sigma \right)m}{2\sigma^2} \right)}\Pr_{z^*, A
        , \xi, \zhat }\left[ \norm{ z^* - \zhat} \geq d \right].
      \end{align*}  
    }
    \state{lemma: rp winf}

    The idea is that with $\sigma$ Gaussian noise, measurements of a
    signal from $R$ aren't too different in distribution from
    measurements of the corresponding nearby signal from $P$.

    Now, if $\cW_\infty(R, P) \ll \sigma$, we would be nearly done:
    Lemma~\ref{lemma: rp winf} says the situation is within $e^{o(m)}$ of the $R = P$ case,
    which we already know gives accurate recovery with
    $O(\log \cov_{\eta, 0}(P))$ measurements.

    \paragraph{Residual mass.}
    There are just two main issues remaining: we want to depend on
    $\log \cov_{\eta, \delta}$ rather than $\log \cov_{\eta, 0}$, and
    we only want to require a bound on $\cW_{1}(R, P)$ not
    $\cW_{\infty}(R, P)$.  By Markov's inequality, these issues are
    very similar: we want to allow both $R$ and $P$ to have a small
    constant probability of behaving badly. To address this, we note
    the existence of two distributions $R'$ and $P'$, which are only
    $\delta$-far in TV from $R$ and $P$ respectively, such that $R'$
    and $P'$ do have a small cover \& are close in $\cW_\infty$.  We
    show that, because posterior sampling would work with $R'$ and
    $P'$, it also works with $R$ and $P$.  This leads to our full
    upper bound:

    \define{thm: main}{Theorem}{%
      Let $\delta \in [0,1/4)$, $p \geq 1$, and $\varepsilon,\eta >0$ be
      parameters.  Let $R,P$ be arbitrary distributions over $\R^n$
      satisfying $\cW_p(R, P) \leq \eps$.  

      Let $x^* \sim R$ and suppose $y$ is generated by a Gaussian
      measurement process from $x^*$ with noise level  $\sigma \gtrsim
      \eps/\delta^{1/p}$ and $m\geq O(\min(\log \cov_{\eta,
      \delta}(R), \log \cov_{\eta, \delta}(P)))$ measurements. Given
      $y$ and the fixed matrix $A,$ let $\xhat$ be the output of
      posterior sampling with respect to $P$. 

  Then there exists a universal constant $c > 0$ such that with probability at least $ 1 - e^{-\Omega(m)}$ over $A, \xi$,
  \begin{align*}
    \Pr_{x^*\sim R , \xhat \sim P( \cdot | y)} \left[ \norm{x^* - \xhat} \geq c \eta + c \sigma \right] &  \leq 2\delta + 2e^{-\Omega(m)}.
  \end{align*}
}%
\state{thm: main}%

Note that we can get a high-probability result by setting
$p = \infty$: if $m \geq O(\log \cov_{\eta, 0}(R))$ and
$\cW_{\infty}(R, P) \leq \sigma$, the error is $O(\sigma + \eta)$ with
$1 - e^{-\Omega(m)}$ probability.

\section{Lower Bound}

In the previous section, we showed, for any distribution $R$ of
signals, that $O(\log \cov(R))$ measurements suffice for posterior
sampling to recover most signals well.  Now we show the converse:
for any distribution of signals $R$, any algorithm for recovery must
use $\Omega(\log \cov(R))$ measurements.

\define{thm: lower}{Theorem}{
  Let $R$ be a distribution supported on a ball of radius $r$ in
  $\R^n$, and $x^* \sim R$.  Let $y = A x^* + \xi$, where $A$ is any
  matrix, and $\xi \sim \cN(0,\frac{\sigma^2}{m}I_m)$.  Assuming
  $\delta<0.1$, if there exists a recovery scheme that uses $y$ and
  $A$ as inputs and guarantees 
	\begin{align*} 
	  \norm{ \hat{x} - x^*} \leq O(\eta),
	\end{align*}
  with probability $\geq 1-\delta$, then we have
	\begin{align*}
    m &\geq \tfrac{0.15}{ \log \left( 1 + \tfrac{m r^2 
    \norm{A}_\infty^2}{\sigma^2} \right)} \left(\logcov_{3\eta, 4
    \delta} (R) + \log 6\delta - O(1)\right).
	\end{align*}

  If $A$ is an i.i.d. Gaussian matrix where each element is drawn from
  $\cN(0,1/m)$, then the above bound can be improved to:
	\begin{align*}
    m \geq \frac{0.15}{ \log \left( 1 + \tfrac{r^2}{\sigma^2} \right)}
      \left(\logcov_{3\eta, 4 \delta} (R) + \log 6\delta -
      O(1)\right).
	\end{align*}
}
\state{thm: lower}

This Theorem is proven using information theory, as an almost direct consequence
of the following three Lemmas. 

First, the measurement process reveals
a limited amount of information:
\define{lemma: MI}{Lemma}{
Consider the setting of Theorem~\eqref{thm: lower}. 
  If $A$ is a deterministic matrix, we have 
  \begin{align*}
    I(y ; x^* ) \leq \frac{m}{2} \log\left( 1 + \frac{m r^2
    \norm{A}_\infty^2}{\sigma^2} \right).
  \end{align*}
  If $A$ is a Gaussian matrix, then
  \begin{align*}
     I(y ; x^* | A) \leq \tfrac{m}{2} \log\left( 1 +
    \tfrac{r^2}{\sigma^2} \right).
  \end{align*}
}
\state{lemma: MI}

Second, since $x^* \to y \to \xhat$ is a Markov chain, we can directly
apply the Data Processing Inequality~\cite{cover2012elements}.
\define{lemma: DPI}{Lemma}{
  Consider the setting of Theorem~\eqref{thm: lower}. 
  If $A$ is a deterministic matrix, we have 
  \begin{align*}
    I(x^*; \xhat) \leq I(y ; x^*).
  \end{align*}

  If $A$ is a random matrix, then 
  \begin{align*}
    I(x^*; \xhat) \leq I(y ; x^* | A) .
  \end{align*}
}
\state{lemma: DPI}

Finally, successful recovery must yield a large amount of information:

\define[Fano variant]{lem: covering continuous}{Lemma}{
  Let $(x, \wh{x})$ be jointly distributed over $\R^n\times \R^n$, where $x \sim R$ and
  $\xhat$ satisfies
  \[
    \Pr[\norm{x - \xhat} \leq \eta] \geq 1-\delta.
  \]
  Then for any $\tau \leq 1 - 3\delta, \delta < 1/3$, we have
  \[
    0.99 \tau (1 - 2\delta)\logcov_{3\eta , \tau + 3\delta }(R)
    \leq I(x; \widehat{x}) + 1.98.
  \]
}%
\state{lem: covering continuous}

In order to complete the proof of Theorem~\ref{thm: lower}, we need an
additional counting argument to remove the extra $\tau$ term that
appears in the left hand side of Lemma~\ref{lem: covering
continuous}.

The proofs can be found in Appendix~\ref{app:lower}.

\section{Experiments}

\begin{figure*}
  \begin{subfigure}[b]{0.48\textwidth}
  \begin{center}
		\includegraphics[width=0.98\columnwidth]{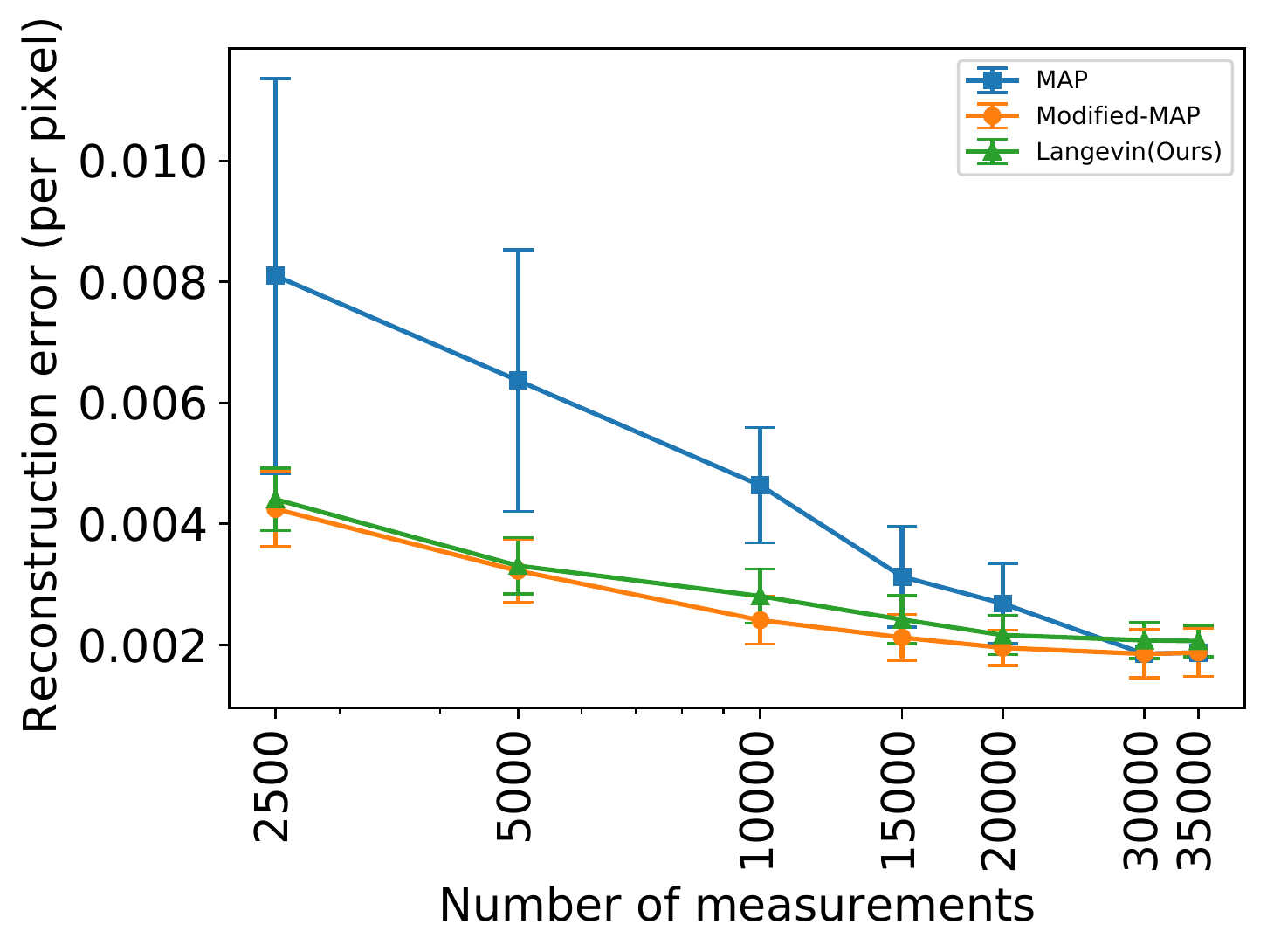}
  \end{center}
  \caption{\small $\norm{x^* - \xhat}^2/n$}
	\label{fig:celeba-l2}
  \end{subfigure}
  \hfill 
  \begin{subfigure}[b]{0.48\textwidth}
  \begin{center}
    \includegraphics[width=0.98\columnwidth]{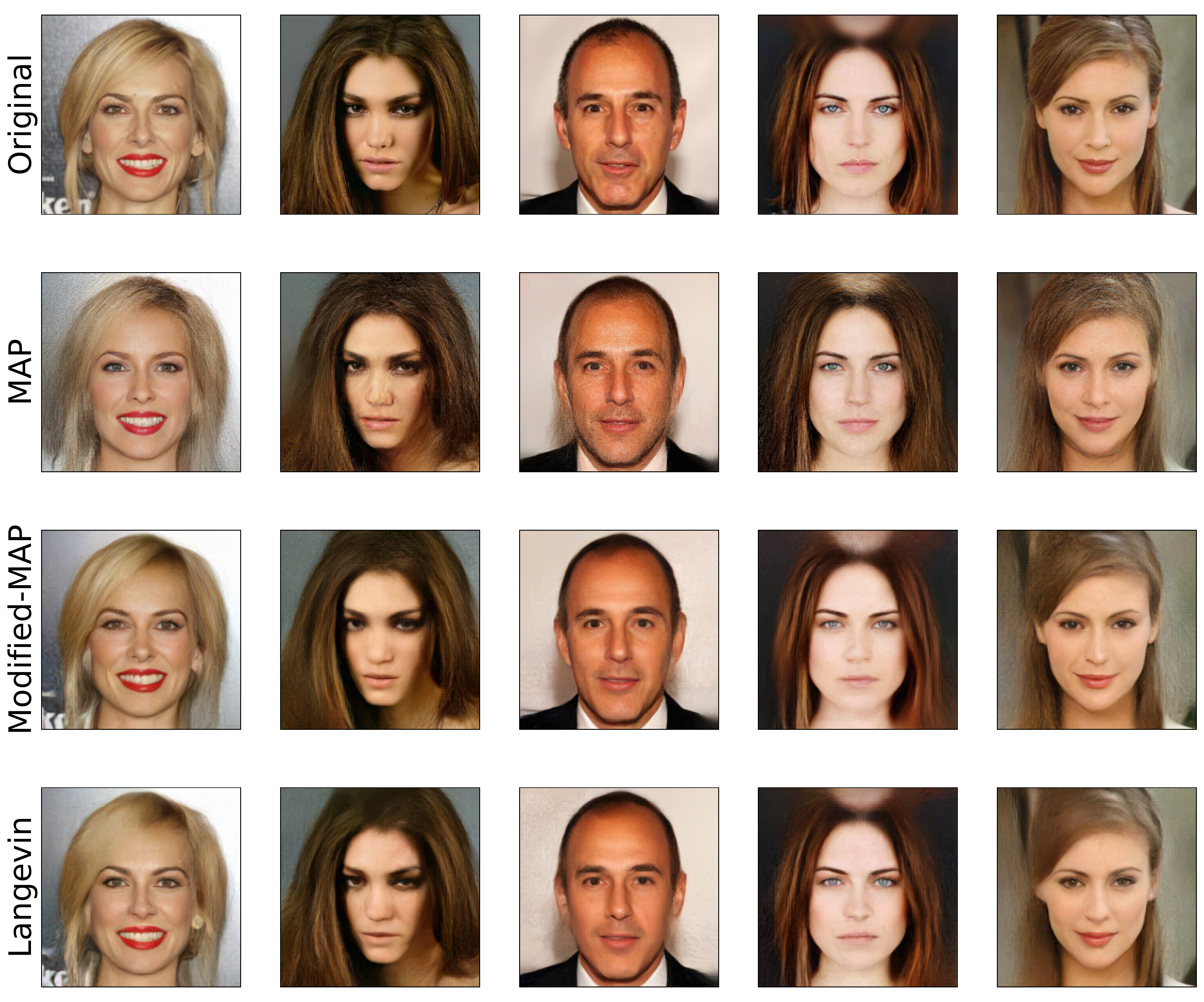}
  \end{center}
  \caption{Reconstructions for $m=20,000$ measurements.}
  \label{fig:celeba-reconstr}
  \end{subfigure}
  \caption{\small We compare our algorithm with the MAP baseline on
    the CelebA-HQ dataset, where the number of pixels is $n=256\times
    256\times 3=196,608.$ In Figure (a) we show a plot of the
    per-pixel reconstruction error as we vary the number of
    measurements $m$. In Figure (b) we show reconstructions obtained
    by each algorithm for $m=20,000$ measurements. We show original
    images (top row), reconstructions by MAP (second row),
    Modified-MAP (third row), and Langevin dynamics (ours, bottom
    row).  Note that MAP produces several artefacts that are not seen
    in Modified-MAP or Langevin dynamics. In these experiments,
    modified-MAP picks hyperparameters based on the reconstruction
    error evaluated on some validation images, while MAP and Langevin
    dynamics pick hyperparameters that maximize the posterior
    likelihood. Here MAP, modified-MAP, and Langevin dynamics all use
  the same Glow model.}
  \label{fig:celeba}
\end{figure*}
  In this section we discuss our algorithm for posterior sampling, discuss why existing algorithms can fail, and show our empirical evaluation of posterior sampling versus baselines.
  
	\subsection{Datasets and Models} We perform our experiments on the
	CelebA-HQ~\cite{liu2018large,karras2017progressive} and
	FlickrFaces-HQ~\cite{karras2019style} datasets.  For the CelebA
	dataset, we run experiments using a Glow generative
	model~\cite{kingma2018glow}. For the FlickrFaces-HQ dataset, we use
	the NCSNv2 model~\cite{song2020improved}.  Both models have output
	size $256\times 256\times 3$.  Details about our experiments are in
	Appendix~\ref{sec: appendix experiments}.  

	\subsection{Langevin Dynamics}
	\paragraph{Glow trained on CelebA-HQ}
  We first consider the Glow generative model, whose distribution $P$
  is induced by the random variable $G(z)$, where $G:\R^n\to\R^n$ is a
  fixed deterministic generative model, and $z\sim \cN(0,I_n)$ .
  Sampling from $p(z|y)$ is easier than sampling from $p(x|y)$, since
  it is easier to compute and we observe that sampling mixes quicker.
  Note that sampling $\wh{z} \sim p(z|y)$ and setting $\wh{x} =
  G(\wh{z})$ is equivalent to sampling $\wh{x} \sim p(x|y)$.

	In order to sample from $p(z|y)$, we use \emph{Langevin dynamics},
	which samples from a given distribution by moving a random initial
	sample along a vector field given by the distribution.  Langevin
	dynamics tells us that if we sample $z_0 \sim \cN(0,1)$, and run the
	following iterative procedure:
  \begin{align*}
    z_{t+1} \leftarrow z_t + \frac{\alpha_t}{2}\nabla_z \log p\left( z_t |y \right) + \sqrt{\alpha_t} \zeta_t,\quad \zeta_t \sim \cN(0,I),
  \end{align*}
  then $p(z|y)$ is the stationary distribution of $z_t$ as $t \to
  \infty$ and $\alpha_t \to 0$.  Unfortunately, this algorithm is slow
  to mix, as observed in \cite{song2019generative}. We instead use an
  annealed version of the algorithm, where in step $t$ we pretend that
  $p(z \mid y)$ has noise scale $\sigma_t \geq \sigma$ instead of
  $\sigma$.  This gives
  \begin{align}\label{eqn:langevin-glow-obj}
   \log p_t(z|y) &=  \left(- \frac{\norm{y - A G(z)}^2}{2\sigma^2_t/m} - \frac{\norm{z}^2}{2}  \right) + \log c(y),
  \end{align}
  where $c(y)$ is a constant that depends only on $y$.  Since we only
  care about the gradient of $\log p(z|y)$, we can ignore this
  constant $c(y)$.  By taking a decreasing sequence of $\sigma_t$ that
  approach the true value of $\sigma$, we can anneal Langevin dynamics
  and sample from $p(z|y)$.  Please refer to Appendix~\ref{sec:
  appendix experiments} for more details about how $\sigma_t$ varies.
	
	\paragraph{NCSNv2 trained on FFHQ}
	We also consider the NCSNv2 model, which takes as input
	the image $x$, and outputs $\nabla_x \log p(x).$ This model is
	designed such that sampling from its marginal involves
	running Langevin dynamics. Since we have access to $\nabla_x \log
	p(x),$ and if we know the functional form of $p(y|x)$, we can easily
	compute $\nabla_x \log p(x|y),$ and run Langevin dynamics via
	\begin{align*}%\label{eqn:ncsn-lang}
    x_{t+1} \leftarrow x_t + \frac{\alpha_t}{2}\nabla_x \log p\left(
		x_t |y \right) + \sqrt{\alpha_t} \zeta_t,\; \zeta_t \sim \cN(0,I).
  \end{align*}

	Notice that we can also run MAP using this model. This can be
	achieved by simply following the gradient, and not adding noise:
	\begin{align*}
    x_{t+1} \leftarrow x_t + \frac{\alpha_t}{2}\nabla_x \log p\left(
 x_t |y \right).
  \end{align*}

	This model also requires annealing, and we follow the schedule
	prescribed by~\cite{song2020improved}. Please see Appendix~\ref{sec:
	appendix experiments} for more details.

	\subsection{MAP and Modified-MAP}
	The most relevant baseline for our algorithm is MAP, which was shown
	to be state-of-the-art for compressed sensing using generative
	priors~\cite{asim2019invertible}. 

	Given access to a generative model
  $G$ such that the image $x = G(z)$, and $q(z)$ is the prior of $z$,
  the MAP estimate is
	\begin{align}
		\wh{z} &:= \argmin_{z} \frac{\norm{y - AG(z)}^2}{2\sigma^2/m} - \log
		q(z),
		\label{eqn:map-defn}
	\end{align}
	and set the estimate to be $\wh{x} = G(\wh{z}).$ Typically, $q(z)$ 
	is a standard Gaussian for many generative models. If one has access
  to $p(x)$, such as in NCSNv2~\cite{song2019surfing}, it is possible
  to also do MAP in $x$-space.

	One may modify this algorithm and introduce
	hyperparameters for better reconstructions. We call such
	algorithms \emph{modified-MAP}. For
	example,~\cite{asim2019invertible} introduce a parameter $\gamma >0$
	that weights the prior, and their estimate is
	\begin{align}
		\wh{z}_{modified} := \argmin_{z} \norm{y -
		AG(z)}^2 -\gamma \log
		q(z),
		\label{eqn:modified-map-defn}
	\end{align}
	Other examples of hyper-parameters include early stopping to avoid
	``over-fitting'' to the measurements, and choosing optimization 
	parameters such that the reconstruction error is minimized on a validation set of
	images. Then these hyper-parameters are used for evaluating reconstruction error on 
	a different test.

\begin{figure*}[t]
  \begin{subfigure}[t]{0.48\textwidth}
  \begin{center}
		\includegraphics[width=0.98\columnwidth]{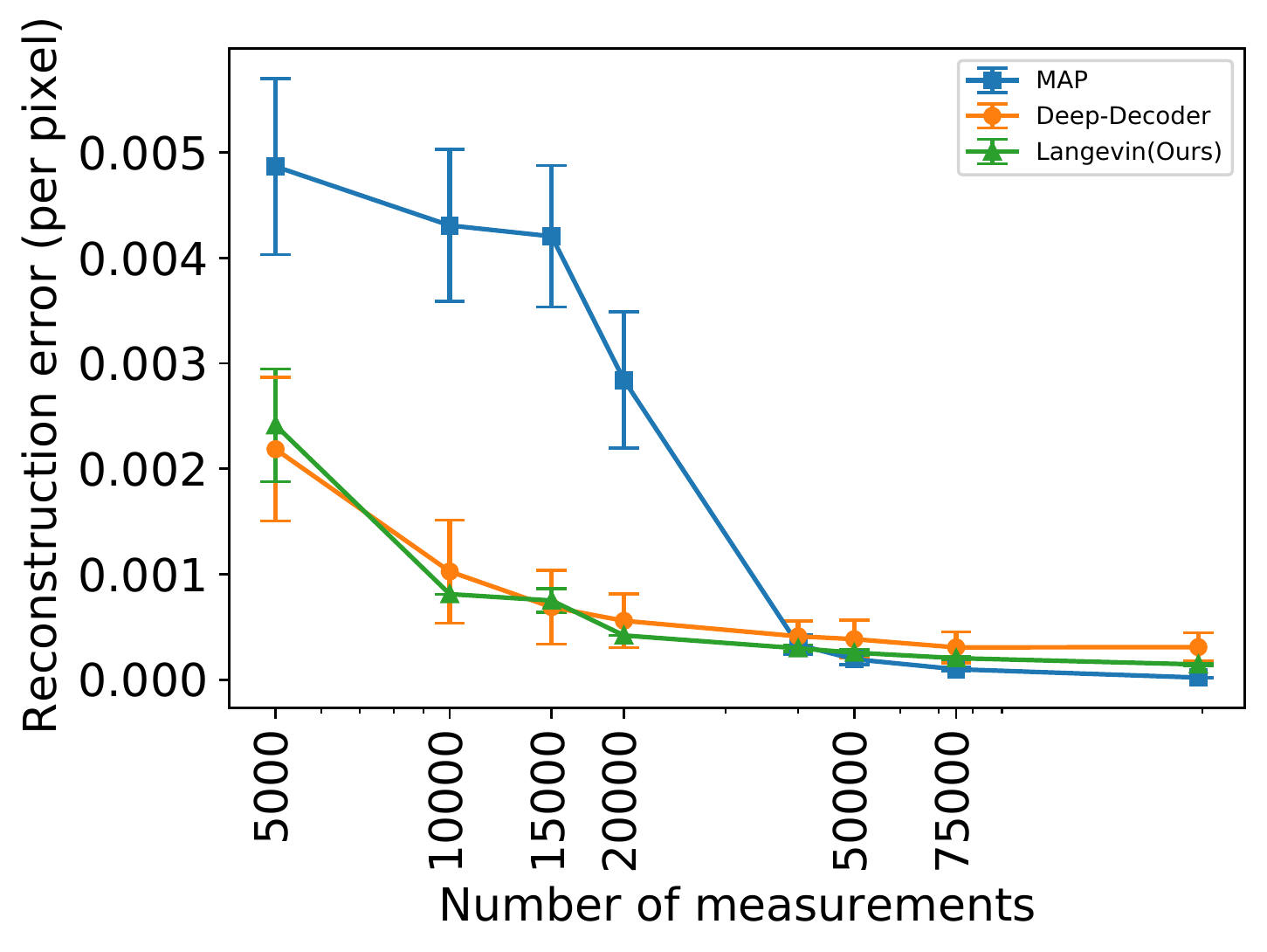}
  \end{center}
  \caption{\small $\norm{x^* - \wh{x}}^2/n$}
	\label{fig:ffhq-l2}
  \end{subfigure}
  \hfill
  \begin{subfigure}[t]{0.48\textwidth}
  \begin{center}
		\includegraphics[width=0.98\columnwidth]{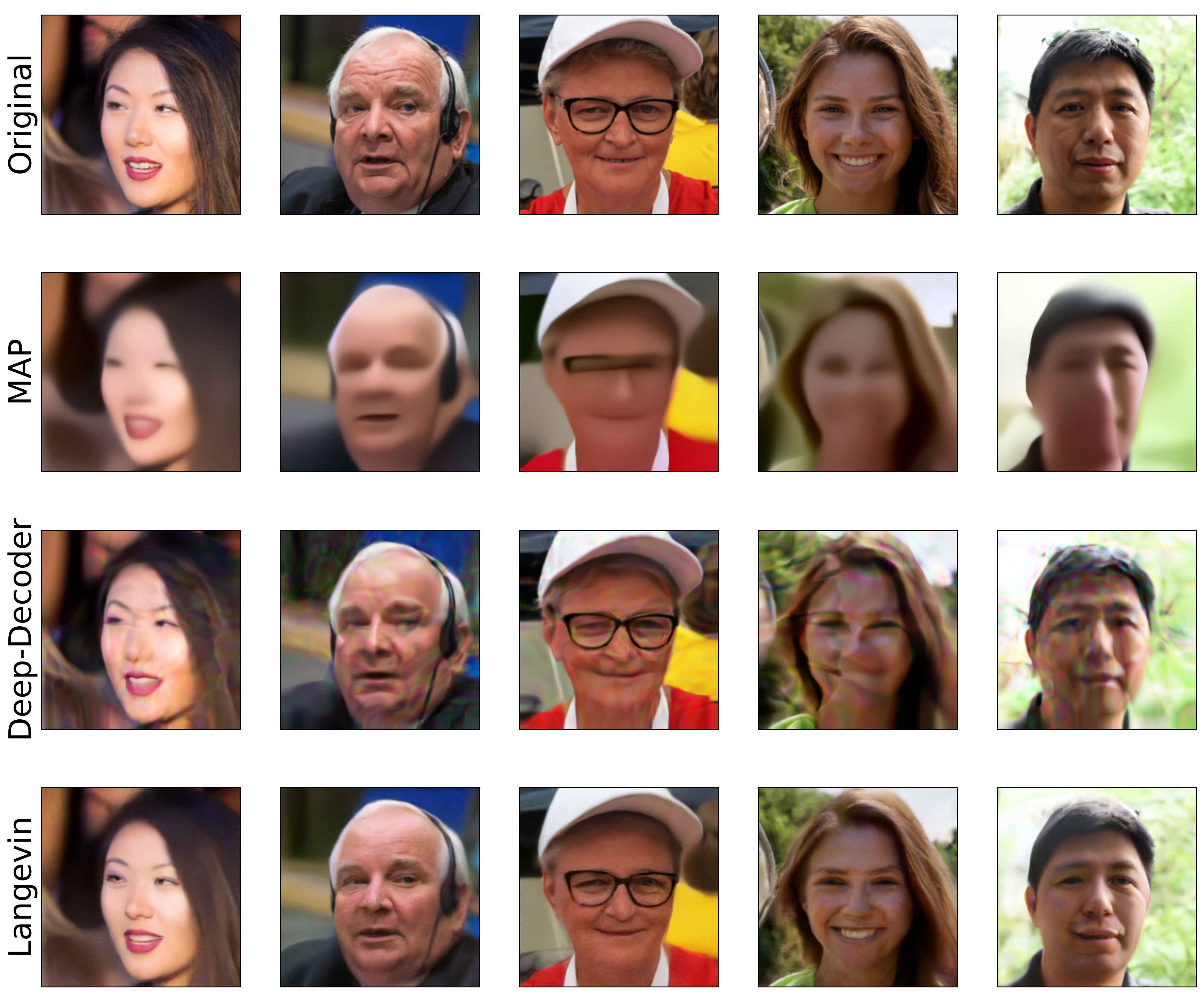}
  \end{center}
	\caption{Reconstructions for $m=5,000$ measurements.}
	\label{fig:ffhq-reconstr}
  \end{subfigure}
  \caption{\small We compare our algorithm with the MAP and
    Deep-Decoder baselines on the FFHQ dataset, where the number of
    pixels is $n=256\times 256\times 3=196,608.$ Figure (a) plots
    per-pixel reconstruction error as we vary the number of
    measurements $m$.  Figure (b) shows original images (top row),
    reconstructions by MAP (second row), Deep-Decoder (third row), and
    Langevin dynamics (bottom row). Langevin dynamics is the practical
    implementation of our proposed posterior sampling estimator.
    Note that although Deep Decoder and Langevin achieve similar value
    of reconstruction errors, Langevin produces images with higher
    perceptual quality, as can be seen in Figure (b).}
  \label{fig:ffhq}
\end{figure*}

\subsection{Experimental Results}
  MAP estimation does not work on general distributions: as an extreme
  example, if $R$ is a mixture of some continuous distribution 99\% of
  the time, and the all-zero image 1\% of the time, it will always
  output the all-zero image, which is wrong 99\% of the time.  More
  generally, looking for high-likelihood \emph{points} rather than
  \emph{regions} means it prefers sharp but very narrow maxima to
  wide, but slightly shorter, maxima.  Posterior sampling prefers
  the opposite. We now study this empirically.

  \paragraph{CelebA.} In Figure~\ref{fig:celeba}, we show the
  performance of our proposed algorithm for compressed sensing on
  CelebA-HQ with Glow. The baselines we consider are MAP, and
  modified-MAP. MAP directly optimizes the objective defined in
  Eqn~\eqref{eqn:map-defn} while Modified-MAP
  optimizes~\eqref{eqn:modified-map-defn}. The MAP baseline in
  Figure~\ref{fig:celeba} tries to maximize the posterior likelihood,
  and hence hyperparameters are selected so that the posterior is
  optimized. In contrast, what we term the modified-MAP algorithm was
  proposed by~\cite{asim2019invertible}, and this algorithm picks
  hyperparameters that minimize reconstruction error on a holdout set
  of images.  These hyperparameters are significantly worse at
  optimizing the MAP objective, but lead to more accurate recovered
  images, presumably due to some sort of implicit regularization.
  This modified-MAP method has shown to be state-of-the-art for
  compressed sensing on CelebA~\cite{asim2019invertible}.

	We find that our algorithm is competitive with respect to
	modified-MAP, and beats MAP when the measurements are $<35,000.$ 

	\paragraph{FFHQ.}
  In Figure~\ref{fig:ffhq}, we show the performance of our proposed
  algorithm for compressed sensing on FlickrFaces-HQ with the NCSNv2
  generative model. We consider MAP and
  Deep-Decoder~\cite{heckel2018deep} as the baselines.  Note that the
  NCSNv2 model was designed for Langevin dynamics, and we adapt it to
  MAP. Hence, we choose the Deep-Decoder as a second baseline, as it
  has been shown to match state-of-the-art~\cite{asim2019invertible}.

	We observe that for $m<40,000$ measurements, Langevin dynamics beats
	MAP, and is competitive with Deep-Decoder. In
	Figure~\ref{fig:cs-reconstr-ffhq} we visually compare the reconstruction
	quality as the number of measurements increases. Note that although
	Langevin and Deep-Decoder have similar reconstruction errors in
	Fig~\ref{fig:ffhq-l2}, the images in Fig~\ref{fig:cs-reconstr-ffhq}
	produced by Langevin dynamics have better perceptual quality. Also
	see Fig~\ref{fig:ffhq-reconstr} for more examples of reconstructions
	at $m=5,000$ measurements.
	
	\paragraph{Inpainting.} In order to highlight the difference in
	diversity between images produced by MAP and Langevin dynamics, we
	evaluate them on the inverse problem of inpainting missing pixels.
	As shown in Figure~\ref{fig:inpaint-hair}, when the hair and
	background of a ground truth image is removed, MAP produces a single
	``most likely'' reconstruction, while Langevin produces diverse
	images that satisfy the measurements. Each column for Langevin
	dynamics in Figure~\ref{fig:inpaint-hair} corresponds to a run
	starting from a random initial point. We do not observe any change
	in MAP reconstructions as we vary the initial point.  
	
	We believe that the MAP reconstruction, while in some sense a highly
	likely reconstruction, is abnormally ``washed out'' and indistinct;
	analogous to how zero is the most likely sample from $N(0, I_d)$,
	yet is extremely atypical of the distribution.  We see this
	quantitatively in that the corresponding $\|z\|^2/n$ for MAP is
	$0.007$, even though samples from $R$ almost surely have $\|z\|^2/n
	\approx 1$, as do those of Langevin.

\section{Conclusion}

This paper studies the problem of compressed sensing a signal from a
distribution $R$.  We have shown that the measurement complexity is
closely characterized by the log approximate covering number of $R$.
Moreover, this recovery guarantee can be achieved by posterior
sampling, even with respect to a distribution $P \neq R$ that is close
in Wasserstein distance.  Our experiments using Langevin dynamics to
approximate posterior sampling match state-of-the-art recovery with a
theoretically grounded algorithm.

This measurement complexity is inherent to the true distribution of
images in the domain, and can't be improved.  But perhaps it can be
estimated: one open question is whether $\log \cov_{\eta,\delta}(P)$ can be
estimated or bounded when $P$ is given by a neural network generative
model.

\section{Acknowledgements} 
Ajil Jalal and Alex Dimakis were supported by NSF Grants CCF
1934932, AF 1901292, 2008710, 2019844 the NSF IFML 2019844 award 
and research gifts by Western Digital, WNCG and MLL, computing
resources from TACC and the Archie Straiton Fellowship.  Sushrut
Karmalkar was supported by a University Graduate Fellowship from 
UT Austin. Eric Price was supported by NSF Award
CCF-1751040 (CAREER) and NSF IFML 2019844.

\bibliographystyle{icml2021}
\bibliography{main}

\appendix 
\onecolumn
\section{Upper Bound Proofs}\label{app:upper}
\subsection{Proof of Lemma~\ref{lemma: mixture tv}}\label{sec: proof mixture tv}
\restate{lemma: mixture tv}
\begin{proof}
  We have
  \begin{align}
    \Pr_{z^*, y, \wh{z}}[z^* = 0, \wh{z}=1] &= \Pr[z^* = 0] \E_{y \sim h_0, \wh{z}|y} [ {1}\{ \wh{z} = 1\}],\\
    &= (1 - c) \int h_0 (y) \Pr [ \wh{z} = 1 | y ] dy.
  \end{align}

  By definition, we have 
  \begin{align*}
    \Pr [ \wh{z} = 1 | y ]  &= \frac{ ch_1 (y)}{(1-c) h_0(y) + ch_1(y)}.
  \end{align*}
  
  Substituting, we have 
  \begin{align*}
    \Pr_{z^*, y, \wh{z}}[z^* = 0, \wh{z}=1] &= \int \frac{ (1-c) h_0 (y) ch_1 (y)}{ (1-c) h_0 (y) + ch_1 (y)} dy \\
    &\leq \int \frac{ (1-c) h_0(y) \cdot ch_1 (y)}{\max \{ (1-c) h_0(y), c h_1 (y) \} } dy\\ 
    &= \int \min \{ (1-c) h_0(y) ,  ch_1 (y) \} dy \\
    &\leq \int \min \{ h_0(y) ,  h_1 (y) \} dy \\
    &= (1 - TV(H_0, H_1)).
  \end{align*}

\end{proof}

\subsection{Proof of Lemma~\ref{lemma: noisy pout tv improved}}\label{sec: proof noisy pout tv improved}

\restate{lemma: noisy pout tv improved}
\begin{proof}
  In order to prove the lemma, it suffices to show that on the set
  \begin{align*}
    B := \{ y \in \R^m: \norm{ y - A \tilde{x}} \leq  \sqrt{c} \left( \eta +  \sigma \right)\},
  \end{align*}
  we have
  \begin{align}
    \E_A [H_{out} (B) ]&\leq 2 e^{-\frac{m}{2}\log\left( \frac{c}{4e^2} \right)} \label{eqn: noisy pout tv improved eqn 1},\\
    \E_A [H_{\tilde{x} } (B)] &\geq 1 - 2 e^{-\frac{m}{2}\log\left( \frac{c}{4e^2} \right)} \label{eqn: noisy pout tv improved eqn 2}.
  \end{align}

  Using the above bounds, we can conclude that 
  \[
  \E_A \left[ TV(H_{out}, H_{\tilde{x}} ) \right] \geq \E_A [H_{\tilde{x}}(B)] - \E_A [H_{out}(B)] \geq 1 - 4e^{-\frac{m}{2}\log\left( \frac{c}{4e^2} \right)}.
  \]

  First we prove Equation~\eqref{eqn: noisy pout tv improved eqn 1}.

  Consider the joint distribution of $y, A$.
  We have
  \begin{align}
    \E_A \left[ H_{out}(B)  \right] & =  \E_A \left[ \E_{x \sim P_{out}} \left[ \cN\left(Ax, \frac{\sigma^2}{m} I_m\right)(B)  \right] \right],\\
    & = \E_{x \sim P_{out}} \left[ \E_A \left[ \cN( A x, \sigma^2/m)(B)  \right] \right], \label{eqn: hout eqn 1}
  \end{align}
  where the first line follows from the definition of $H_{out}$ and the fact that $x, A$ are independent. The last line follows by switching the order of integrating $A, x$. Here $\cN( A x, \sigma^2/m)(B)$ refers to the mass $\cN(Ax, \sigma^2/m)$ places on $B$. 

  Consider a fixed $x \in S_{\tilde{x},out}$, that is, $x$ lies in the support of $P_{out}$ and satisfies $\norm{ x - \tilde{x}} \geq c (\eta + \sigma\sqrt{m})$.
  We split the above expectation into two conditions over the matrix $A$.
  \begin{itemize}[leftmargin=*]
    \item Case 1:  $ \norm{A x - A \tilde{x}} \leq 2 \sqrt{c} \left( \eta+\sigma \right)$. Since $A$ is i.i.d. Gaussian, $ A \left( x - \tilde{x} \right)$ is distributed as $\cN \left(0, \frac{\norm{x - \tilde{x}}^2}{m}I_m \right)$.
      This gives 
      \begin{align*}
	\Pr_A \left[ \norm{A x - A \tilde{x}} < 2 \sqrt{c} \left( \eta + \sigma \right) \right] & \leq \Pr_A\left[ \norm{A x - A \tilde{x}} \leq \frac{2}{\sqrt{c}}\norm{x - \tilde{x}} \right] ,\\
	&\leq \frac{2}{\sqrt{m\pi}}\left( \frac{2e}{\sqrt{c}} \right)^m ,\\
	&= \frac{2}{\sqrt{m\pi}} e^{-\frac{m}{2}\log\left( \frac{c}{4e^2} \right)} ,\\
	&\leq e^{-\frac{m}{2}\log\left( \frac{c}{4e^2} \right)} \quad \text{ if } m > 1.
      \end{align*}

      This implies
      \begin{align*}
	\E_{x \sim P_{out}} \left[ \E_A \left[ \cN( A x, \sigma^2/m)(B) {1}_{\norm{A x - A \tilde{x}} < 2 \sqrt{c}\left( \eta + \sigma \right)} \right] \right]
	& \leq \E_{x \sim P_{out}} \left[ \E_A \left[ {1}_{\norm{A x - A \tilde{x}} < 2 \sqrt{c}\left( \eta + \sigma \right)} \right] \right],\\
	& = \E_{x \sim P_{out}} \left[ \Pr_{A }\left[ \norm{A x - A \tilde{x}} \leq 2 \sqrt{c} \left( \eta + \sigma \right) \right]  \right],\\
	&\leq e^{-\frac{m}{2}\log\left( \frac{c}{4e^2}\right)}.
      \end{align*}

    \item Case 2: $ \norm{ A x - A \tilde{x} } > 2\sqrt{c} \left(  \eta + \sigma \right)$.

      Recall the definition of $B := \left\{ y \in \R^m: \norm{ y - A \tilde{x}} \leq \sqrt{c} \left( \eta + \sigma \right)  \right\}$.
      For any $y \in B$, $x$ in the support of $P_{out}$ and for $A$ such that $ \norm{A x - A \tilde{x}} > 2  \sqrt{c} \left( \eta + \sigma \right)$, we have
      \begin{align*}
	\norm{ y - A x} \geq \norm{A x - A \tilde{x}} - \norm{ y - A \tilde{x} } \geq 2\sqrt{c} \left( \eta+ \sigma \right) - \sqrt{c} \left( \eta+\sigma\right) = \sqrt{c} \left( \eta+ \sigma\right).
      \end{align*}
      For each $x$ in the support of $P_{out}$, define the set $B_x := \left\{  y\in \R^m: \norm{ y - A x}  \geq \sqrt{c} \left( \eta+ \sigma\right)\right\}.$
      The above inequality gives $B\subseteq B_x$ for each $x$ in the support of $P_{out}$.
      This gives
      \begin{align*}
	\cN ( Ax, \sigma^2)(B) \leq \cN(Ax, \sigma^2)(B_x) \leq  e^{- 2 \left( \sqrt{c} - 1  \right)^2 m } \leq e^{-\frac{mc}{2}}.
      \end{align*}
      where the last inequality follows by the definition of $B_x$ and Gaussian concentration of $\cN(Ax,\sigma^2)$ on the set $B_x$, and since $ 2\left( \sqrt{c} -1 \right)^2 > \frac{c}{2}$ if $c\geq 4$.

  \end{itemize}

  Substituting the inequalities from Case 1 and Case 2 in Eqn~\eqref{eqn: hout eqn 1}, we have
  \begin{align*}
    \E_A \left[ H_{out}(B)  \right] &= \E_{x \sim P_{out}} \left[ \E_A \left[ \cN( A x, \sigma^2/m)(B)  \right] \right],\\
    & \leq e^{-\frac{m}{2} \log \left( \frac{c}{4e^2} \right)} + e^{-\frac{cm}{2}},\\
    & \leq 2 e^{-\frac{m}{2}\log\left( \frac{c}{4e^2} \right)} \quad \text{ if } c \geq 4 e^2.
  \end{align*}

  This proves Eqn~\eqref{eqn: noisy pout tv improved eqn 1}.

  A similar proof can be used to show that
  \begin{align*}
    \E_A \left[ H_{\tilde{x}}(B^c)  \right] \leq 2 e^{-\frac{m}{2}\log\left( \frac{c}{4e^2} \right)}.
  \end{align*}

  This proves Eqn~\eqref{eqn: noisy pout tv improved eqn 2}.

  Putting the two above inequalities together, we have 
  \[
   \E_A TV(H_{out}, H_{\tilde{x}}) \geq \E_A[ H_{\tilde{x}}(B)] - \E_A[ H_{out(B)}] \geq 1 - 4e^{-\frac{m}{2}\log\left( \frac{c}{4e^2} \right)}.
  \]

  This concludes the proof.
\end{proof}

\subsection{Proof of Lemma~\ref{lemma: wk implies winf}}\label{sec: proof wk implies winf}

\state{lemma: wk implies winf}

\begin{proof}
Since the statement of the lemma is symmetric with respect to $P$ and $R$, WLOG let $\log \cov_{\eta, \delta}(P) \leq k$. Then there is an $S \subset \mathbb{R}^n$ such that $|S| \leq e^k$ and 
  \begin{align*}
  \Pr_{x \sim P}[x \in \cup_{u \in S} B(u, \eta)] = 1-c_P \geq 1-\delta,
  \end{align*}
  
  We define the function $f:\R^n \to \R_+$ as
  \begin{align*}
    f(x) =\begin{cases}
      \frac{1}{|\{u \in S \mid x \in B(u, \eta) \}| } & \text{ if } \exists u \in S \; s.t.\; x \in B(u,\eta), \\
      0 & \text{ otherwise}.
    \end{cases}
  \end{align*}
  By construction, $f$ is a piecewise constant function that is inversely proportional to
  the number of $\eta-$radius balls centered around points in $S$ cover a point $x$. 

  For each $u \in S$, we define the measure $Q''$ as
  \begin{align*}
    Q''(u) := \int_{ B(u, \eta)} f \; dP.
  \end{align*}

  Observe that 
   \begin{align*} 
  \sum_{u \in S} Q''(u) &= \sum_{ u \in S} \int_{B(u, \eta)} f dP, \\
  &= \int_{\cup_{u \in S}B(u, \eta)} dP = 1 - c_P
  \end{align*} 

  Notice that $Q''$ is not a probability distribution, since it only has mass $1-c_P$.
  However we can create a distribution $Q'$ from $Q''$ by putting an additional $c_P$ mass on some arbitrary point in $\mathbb{R}^n$ (say, $0$). By construction, there exists a coupling $\Pi$ of $P$ and $Q'$ where the coupling distributes the mass at each point in $\R^n$ to points $\eta$ close to it in $S$, such that 
     \begin{align}\label{eqn:c_P bound}
  c_P = \Pr_{(x_1, x_2) \sim \Pi} \left[ \norm{x_1 - x_2} \geq \eta \right] \leq\delta.
  \end{align}

  Additionally, since $W_p(R, P) \leq \rho$, there exists a coupling $\Gamma$ such that. 
   \begin{align}\label{eqn:c_R bound}
  c_R = \Pr_{(x_1, x_2) \sim \Gamma} \left[ \norm{x_1 - x_2} \geq \frac{\rho}{\delta^{1/p}} \right] \leq \frac{\E\left[ \norm{x_1 - x_2}^p \right]}{\frac{\rho^p}{\delta}} \leq \delta.
  \end{align}
  where $c_P$ is defined by the first equality.
  We can hence define a couple between $P, Q', R$ whose distribution is given by the following -- for any borel measurable sets $B_1, B_2, B_3$ we have $\Omega(B_1, B_2, B_3) = P(B_1) \Pi(B_2 \mid B_1)\Gamma(B_3 \mid B_1)$. To verify that this is indeed a coupling of the kind we want, we observe that the marginals of $\Omega$ are $P, Q$ and $R$ respectively. 
  \begin{enumerate}
  	\item $\Omega(B_1, \R^n, \R^n) = P(B_1) \Pi(\R^n \mid B_1)\Gamma(\R^n \mid B_1) = P(B_1)$. 
  	\item $\Omega(\R^n, B_2, \R^n) = P(\R^n) \Pi(B_2 \mid \R^n)\Gamma(\R^n \mid \R^n) = 1 \cdot \frac{\Pi(B_2, \R^n)}{P(\R^n)} \cdot 1 = Q'(B_2)$.
  	\item $\Omega(\R^n, \R^n, B_3) = P(\R^n) \Pi(\R^n \mid \R^n)\Gamma(B_3 \mid \R^n) = R(B_3)$.
  \end{enumerate}

  To define $P', Q, R'$, we look at $\Omega$ conditioned on the event $E := \{ (x, y, z) \mid \|x-z\| \leq \rho/\delta^{1/p} \text{ and } \norm{x-y} \leq \eta \}$. To estimate the probability of $E$, we define $E_1 := \{ (x, y, z) \mid z \in \R^n \text{ and } \norm{x-y} > \eta \}$ and $E_2 :=  \{ (x, y, z) \mid \|x-z\| > \rho/\delta^{1/p} \text{ and } y \in \R^n \}$. Then, $\overline{E} = E_1 \lor E_2$. 
  
  We now show that $\Omega(E_1) \leq \delta$. Let $(E_1)_I$ denote $E_1$ restricted to the coordinates in $I$. 
  \begin{align*}
  \Omega(E_1) := P((E_1)_{1}) \Pi((E_1)_{1,2}\mid (E_1)_{1}) \Gamma((E_1)_{1,3} \mid (E_1)_{1}) \leq \Pi((E_1)_{1,2}) \leq \delta,
  \end{align*}
  where the first inequality is because $\Gamma((E_1)_{1,3} \mid (E_1)_{1}) \leq 1$ and $\Pi((E_1)_{1,2}\mid (E_1)_{1}) = \Pi((E_1)_{1,2})/P((E_1)_{1})$ and the final inequaity follows from equation~\eqref{eqn:c_P bound}. The bound for $E_2$ follows similarly. A union bound shows that $\Omega(E) \geq 1-2\delta$. We can restrict the event $E$ further to have mass $1-2\delta$. 
  
   We look at the marginals of the  conditional couple $\Omega(\cdot \mid E)$ to get distributions $P', Q, R'$ as follows. We define $P'(\cdot) := \Omega(\cdot, \R^n, \R^n \mid E)$, $Q(\cdot) := \Omega(\R^n, \cdot, \R^n \mid E)$ and $R'(\cdot) := \Omega(\R^n,\R^n, \cdot \mid E)$. $P''$ and $R''$ are defined similarly via conditioning on $\overline{E}$. Hence, $P(\cdot) = \Omega(\cdot, \R^n, \R^n) = \Omega(E) \Omega(\cdot, \R^n, \R^n \mid E) + \Omega(\overline{E}) \Omega(\cdot, \R^n, \R^n \mid \overline{E}) = (1-2\delta)P'(\cdot) + (2\delta) P''(\cdot)$. The statement for $R$ follows similarly. 

   This finally gives distributions $P', R',Q,$ such that:
   \begin{enumerate}
   	\item $\cW_{\infty}(P', Q) \leq \eta$
   	\item $\cW_{\infty}(R', P') \leq \rho/\delta^{1/p}$ 
   	\item $P = (1-2\delta)P' + (2\delta) P''$ and $R = (1-2\delta)R' + (2\delta) R''$.  
   \end{enumerate}
The first two statements follow because of the event we condition over.

 Note that this restriction does not change the fact that $\supp(Q) < e^k$, and hence we have our result. 

\end{proof}

\subsection{Proof of Lemma~\ref{lemma: rp winf}}\label{sec: proof rp winf}

\restate{lemma: rp winf}
\begin{proof}
  Let $B_1$ denote the event 
  \begin{align*}
    B_1 = \left\{ \norm{x^* - \widehat{x} } \geq d + \varepsilon \right\}.
  \end{align*}

  Similarly, let $B_2$ denote the event 
  \begin{align*}
    B_2 = \left\{ \norm{z^* - \widehat{x} } \geq d \right\}.
  \end{align*}

  We have
  \begin{align*}
    \Pr_{x^* \sim R, A, \xi, \widehat{x} \sim P( \cdot | A, y)} \left[ B_1 \right] &=  \E_{x^* \sim R} \E_{A} \left[ \E_{y| A, x^*}\left[ \E_{\widehat{x} \sim P(\cdot| y, A)} [ {1}_{B_1} ] \right] \right].
  \end{align*}

  We can write the integral over $R$ as an integral over the coupling
  $\Pi$ between $R,P$.
  This gives

  \begin{align*}
    \Pr_{x^*, A, \xi, \widehat{x} \sim P( \cdot | A, y)} \left[ B_1 \right] &=  \E_{x^*,z^*} \E_{A} \left[ \E_{y| A, x^*}\left[ \E_{\widehat{x} \sim P(\cdot| y, A)} [ {1}_{B_1} ] \right] \right].
  \end{align*}

  Since $x^*, z^*$ are coupled and $W_\infty(R,P)\leq \varepsilon$, we have $\norm{ x^* - z^* } \leq \varepsilon$ almost surely.
  This gives $ B_1 \subseteq B_2$ if $x^*, z^*$ are distributed according to $\Pi$.
  Hence,
  \begin{align*}
    \Pr_{x^*, A, \xi, \widehat{x} \sim P( \cdot | A, y)} \left[ B_1 \right] &\leq  \E_{x^*,z^*} \E_{A} \left[ \E_{y| A, x^*}\left[ \E_{\widehat{x} \sim P(\cdot| y, A)} [ {1}_{B_2} ] \right] \right].
  \end{align*}

  We can split the above integral into two parts: one where the matrix $A$ satsifies $\norm{ A x^* - A z^*} \leq 2 \varepsilon$, and another case where $\norm{A x^* - A z^*} > 2\varepsilon$. 
  This gives

  \begin{align}
    \Pr_{x^*, A, \xi, \widehat{x} \sim P( \cdot | A, y)} \left[ B_1 \right] &\leq  \E_{x^*,z^*} \E_{A} \left[ {1}_{\norm{A x^* -A z^*} > 2\varepsilon} \E_{y| A, x^*}\left[ \E_{\widehat{x} \sim P(\cdot| y, A)} [ {1}_{B_2} ] \right] \right]  (*) \label{eqn: rp winf eqn 1}\\ 
    & + \E_{x^*,z^*} \E_{A} \left[ {1}_{\norm{A x^* -A z^*} \leq 2\varepsilon} \E_{y| A, x^*}\left[ \E_{\widehat{x} \sim P(\cdot| y, A)} [ {1}_{B_2} ] \right] \right].(**)\label{eqn: rp winf eqn 2}
  \end{align}

  Consider the term$(*)$ in line~\eqref{eqn: rp winf eqn 1}.
  We have
  \begin{align}
   \E_{x^*,z^*} \E_{A} \left[ {1}_{\norm{A x^* -A z^*} > 2\varepsilon} \E_{y| A, x^*}\left[ \E_{\widehat{x} \sim P(\cdot| y, A)} [ {1}_{B_2} ] \right] \right] & \leq \E_{x^*,z^*} \E_{A} \left[ {1}_{\norm{A x^* -A z^*} > 2\varepsilon} \right], \\  
   & \leq \E_{x^*, z^*} \left[ e^{-\Omega(m)} \right] \leq e^{-\Omega(m)},\label{eqn: rp winf eqn3}
  \end{align}
  where the last inequality follows from the Johnson-Lindenstrauss lemma for a fixed $x^*, z^*$, and hence is true on average over $x^*, z^*$ drawn independent of $A$.

  Now consider the term $(**)$ in line~\eqref{eqn: rp winf eqn 2}.
  Notice that since the noise in the measurements is Gaussian, we have 
  \begin{align*}
    y | x^*, A \sim \cN( Ax^* , \sigma^2 /m ).
  \end{align*}

  We break the integral over $y$ in $(**)$ into two cases:
  \begin{enumerate}

    \item Case 1: $ \norm{y - A x^*} > 2 \sigma$.
      Since $p(y | A , x^*)$ is distributed as $\cN\left( Ax^*, \frac{\sigma^2}{m} I_m \right)$, by standard Gaussian concentration, we have
      \begin{align*}
	\int_{y: \norm{y - A x^*} > 2 \sigma} p(y | A, x^*) dy & \leq e^{-\Omega(m)}.
      \end{align*}

    \item Case 2: $\norm{ y - A x^*} \leq 2\sigma$. This gives
      \begin{align*}
	\norm{ Ax^* - y}^2 &= \norm{A x^* - y}^2 - \norm{ y - Az^*}^2 + \norm{y - Az^*}^2 ,\\ 
	&= \norm{ A x^* - y}^2 - \norm{y - A x^* + A x^* - A z^*}^2 + \norm{ y - A z^*}^2,\\
	&=  -\norm{A x^* - A z^*}^2 - 2 \inner{y - Ax^*, A x^* - Az^*} + \norm{ y - Az^*}^2.
      \end{align*}

      Observe that in $(**)$, we have 
      \begin{align*}
	\norm{A x^* - A z^*} &\leq 2\varepsilon \Rightarrow \norm{A x^* - A z^*}^2 \leq 4\varepsilon^2.
      \end{align*}
      By the Cauchy-Schwartz inequality and the assumption that $\norm{y - A x^*}\leq 2\sigma$, we have 
      \begin{align*}
        2\inner{y - A x^* , A x^* - A z^*} &\leq 8\sigma\varepsilon.
      \end{align*}

      Substituting the above two inequalities, we have
      \begin{align}
	\norm{ Ax^* - y}^2 &\geq  -4 \varepsilon^2 -  8\sigma\varepsilon  + \norm{ y - Az^*}^2 , \label{eqn: rp winf eqn 3}\\ 
	\Rightarrow \exp{\left( -\frac{\norm{ A x^* - y}^2}{2\sigma^2/m} \right)}&\leq  \exp{ \left( \frac{4\varepsilon\left( \varepsilon + 2\sigma \right) m}{2\sigma^2} \right) } \exp{\left( -\frac{\norm{ A z^* - y}^2}{2\sigma^2/m} \right)},\\
      \end{align}
      Observe that the LHS has the density of measurements from $x^*$, while the RHS has the density of
      measurements from $z^*$ with an exponential scaling. From the above inequality, we can
      replace the expectation over $y|A,x^*$ in $(**)$ with $u| A, z^*$ with an exponential factor.

      Similarly, since posterior sampling now uses $u$ in place of $y$,
      we can replace $\xhat$ in $(**)$ with $\wh{z}$.
  \end{enumerate}

  Combining Case 1 and 2 gives
  \begin{align*}
    (**) & \leq e^{-\Omega(m)} + e^{\left( \frac{4\varepsilon\left( \varepsilon+2\sigma \right)m}{2\sigma^2} \right)} \E_{x^*,z^*} \E_{A} \left[ \E_{u| A, z^*}\left[ \E_{\widehat{z} \sim P(\cdot| u, A)} [ {1}_{B_2} ] \right] \right],\\
    & = e^{-\Omega(m)} + e^{\left( \frac{4\varepsilon\left(
    \varepsilon+2\sigma \right)m}{2\sigma^2} \right)} \E_{z^* \sim P} \E_{A} \left[ \E_{u| A, z^*}\left[ \E_{\widehat{z} \sim P(\cdot| u, A)} [ {1}_{B_2} ] \right] \right].
  \end{align*}

  From the above inequality and eqn.~\eqref{eqn: rp winf eqn3}, we have 
  \begin{align*}
    \Pr_{x^*\sim R , \xi , A, \widehat{x} \sim P( \cdot | A, y)}
    \left[ \norm{ x^* - \widehat{x}} \geq d+ \varepsilon \right] &
    \leq  e^{-\Omega(m)}  + e^{\left( \frac{4\varepsilon\left(
    \varepsilon+2\sigma \right)m}{2\sigma^2} \right)} \Pr_{z^* \sim P, \xi,A, \widehat{z} \sim P( \cdot | u, A)}\left[ \norm{z^* - \widehat{z}} \geq d \right].
\end{align*}
\end{proof}

\subsection{Proof of Theorem~\ref{thm: main}}\label{sec: proof main}

\restate{thm: main}
\begin{proof}
	 We know from Lemma~\ref{lemma: wk implies winf} that there exist $R', P', R'', P''$  
	 and a finite distribution $Q$ supported on the set $S$ such that 
	\begin{enumerate}
		\item $\cW_{\infty}(R', P') \leq \frac{\eps}{\delta^{1/p}}$,
		\item $\min \{ \cW_{\infty}(P', Q),\cW_{\infty}(R', Q) \}  \leq \eta$,
		\item $R = (1-2\delta) R' + 2\delta R''$ and $P = (1-2\delta) P' + 2\delta P''$,
		\item $|S| \leq e^{k}.$
	\end{enumerate}

	Suppose $\cW_{\infty}(P', Q) \leq \eta$. If not, then $\cW_{\infty}(R', Q) \leq \eta$, and by (1),  we see that $\cW_{\infty}(P', Q) \leq \eta + \frac{\eps}{\delta^{1/p}}$, and we will use this in the proof instead. This gives us 
	\begin{align}\label{eqn: rp thm eqn 1}
	\Pr_{x^*\sim R , \widehat{x} \sim P( \cdot | y)} \left[ \norm{x^* - \widehat{x}} \geq (c+1) \eta + (c+1) \sigma \right] &
	\leq \Pr_{x^*\sim R , \widehat{x} \sim P( \cdot | y)} \left[
	\norm{x^* - \widehat{x}} \geq (c+1) \eta + c \sigma +
	(\eps/\delta^{1/p}) \right] \nonumber\\
    &  \leq 2\delta + (1-2\delta) \Pr_{x^*\sim R' ,\widehat{x} \sim P(
		\cdot | y)} \left[ \norm{x^* - \widehat{x}} \geq (c+1) \eta + c
		\sigma + (\eps/\delta^{1/p}) \right] ,
  \end{align}
	where the first line follows since $\sigma \geq \varepsilon /
	\delta^{\frac{1}{p}}$, and the second line follows by decomposing $R
	= (1- 2\delta) R' + 2 \delta R''.$

	We now bound the second term on the right hand side of the above equation.
	For this term, consider the joint distribution over $x^*, A , \xi, \widehat{x}$.
	By Lemma~\ref{lemma: rp winf},  we can replace $x^* \sim R'$ with $z^* \sim P'$, 
	replace $y = Ax^* + \xi$ with $u = A z^* + \xi,$ and 
	replace $\xhat \sim P( \cdot | A,y)$ with $\wh{z} \sim P( \cdot | A,u)$
	to get the following bound
  \begin{align}
    & \Pr_{x^*\sim R' , A , \xi, \widehat{x} \sim P( \cdot | A, y)} \left[ \norm{x^* - \widehat{x}} \geq \left( c + 1 \right)\eta + c\sigma + (\eps/\delta^{1/p}) \right] \leq \nonumber \\
		& e^{-\Omega(m)} + e^{\left( \frac{2(\eps/\delta^{1/p})\left( (\eps/\delta^{1/p}) + 2\sigma \right)m}{\sigma^2} \right)}\Pr_{z^*\sim P', A , \xi, \widehat{z} \sim P(\cdot | u, A)}\left[ \norm{ z^* - \widehat{z}} \geq (c+1)\eta + c \sigma \right]\label{eqn: rp thm eqn 2}.
  \end{align}

    We now bound the second term in the right hand side of the above inequality.
    Let $\Gamma$ denote an optimal $\cW_\infty-$coupling between $P'$ and $Q$. 

    For each $\tilde{z} \in S$, the conditional coupling can be defined as
    \[
    \Gamma( \cdot | \tilde{z}) = \frac{\Gamma( \cdot, \tilde{z})}{Q(\tilde{z})}.
    \]
    By the $\cW_\infty$ condition, each $\Gamma( \cdot | \tilde{z})$ is supported on a ball of radius $\eta$ around $\tilde{z}$.

    Let $ E = \{ z^*, \widehat{z} \in \R^n: \norm{ z^* - \widehat{z}} \geq \left( c+1 \right)\eta+ c\sigma\}$ denote the event that $z^*, \widehat{z}$ are far apart.
  By the coupling, we can express $P'$ as 
  \begin{align*}
    P' &= \sum_{\tilde{z} \in S} Q(\tilde{z}) \Gamma( \cdot | \tilde{z}).
  \end{align*}
  This gives 
  \begin{align*} 
    \Pr_{z^* \sim P', A , \xi, \widehat{z} \sim P(\cdot | A, u)} \left[ E \right] & = \sum_{\tilde{z}^* \in S} Q ( \tilde{z}^*) \E_{z^*\sim \Gamma( \cdot | \tilde{z}^*) , A , \xi, \widehat{z} \sim P(\cdot | A, u) } \left[ {1}_E \right].
  \end{align*}
  
    For each $\tilde{z}^* \in S$, we now bound $Q ( \tilde{z}^*) \E_{z^*\sim \Gamma( \cdot | \tilde{z}^*) , A , \xi, \widehat{z} \sim P(\cdot | A, u) } \left[ {1}_E \right].$ 

		For each $\tilde{z}^* \in S$, we can write $P$ as $P = \left( 1-2\delta \right) Q_{\tilde{z}^*} P_{\tilde{z}^*, 0} + c_{\tilde{z}^*, 1 } P_{\tilde{z}^*, 1} + c_{\tilde{z}^*,2} P_{\tilde{z}^*,2}$, where the components of the mixture are defined in the following way. The first component $P_{\tilde{z}^*,0}$ is $\Gamma( \cdot | \tilde{z}^*)$, the second component is supported within a $c(\eta + \sigma)$ radius of $\tilde{z}^*$, and the third component is supported outside a $c\left( \eta+\sigma \right)$ radius of $\tilde{z}^*$.

		Formally, let $B_{\tilde{z}^*}$ denote the ball of radius $c(\eta
		+ \sigma)$ centered at $\tilde{z}^*$, and let $B^c_{\tilde{z}^*}$
		be its complement. The constants are defined via the following
		Lebesque integrals, and the mixture components for any Borel
		measurable $B$ are defined as
		\begin{align*}
		  c_{\tilde{z}^*, 1} &:= \int_{B_{\tilde{z}^*}} dP - \left( 1-2\delta \right) Q_{\tilde{z}^*} \int_{B_{\tilde{z}^*}} d\Gamma( \cdot | \tilde{z}^*)  ,\\
		  \nonumber\\
		  c_{\tilde{z}^*, 2} &:= \int_{B_{\tilde{z}^*}^c} dP - \left( 1-2\delta \right) Q_{\tilde{z}^*} \int_{B_{\tilde{z}^*}^c} d\Gamma( \cdot | \tilde{z}^*)  ,\\
		  \nonumber\\
		  P_{\tilde{z}^*, 0}(B) &:= \Gamma( B \cap B_{\tilde{z}^*} | \tilde{z}^*) = \Gamma( B  | \tilde{z}^*) \text{ since } \supp(\Gamma(\cdot | \tilde{z}^*)) \subset B_{\tilde{z}^*}, \\
		  \nonumber\\
		  P_{\tilde{z}^*, 1}(B) &:= \begin{cases}
		    \frac{ 1}{c_{\tilde{z}^*, 1}}  P( B \cap B_{\tilde{z}^*})  - \frac{ 1 - 2\delta }{c_{\tilde{z}^*, 1}} Q_{\tilde{z}^*} \Gamma(  B \cap B_{\tilde{z}^*} | \tilde{z}^*)  & \text{ if } c_{\tilde{z}^*, 1} > 0,\\
		    \text{do not care } & \text{ otherwise.}
		  \end{cases},\\
		  \nonumber\\
		  P_{\tilde{z}^*, 2}(B) &:= \begin{cases}
		    \frac{ 1}{c_{\tilde{z}^*, 2}}  P( B \cap B^c_{\tilde{z}^*})  - \frac{ 1 - 2\delta }{c_{\tilde{z}^*, 2}} Q_{\tilde{z}^*} \Gamma(  B \cap B^c_{\tilde{z}^*} | \tilde{z}^*)  & \text{ if } c_{\tilde{z}^*, 2} > 0,\\ \text{do not care } & \text{ otherwise.}
		  \end{cases}.
		\end{align*}
		Notice that if $z^*$ is sampled from $\Gamma(\cdot | \tilde{z}^*)$, then by the $W_\infty$ condition, we have $\norm{ z^* - \tilde{z}^*} \leq \eta$. Furthermore, if $\widehat{z}$ is $\left( c+1 \right)\eta + c\sigma$ far from $z^*$, an application of the triangle inequality implies that it must be distributed according to $P_{\tilde{z}^*,2}$.
    That is, 
    \begin{align*}
			Q ( \tilde{z}^*) \E_{z^*\sim \Gamma( \cdot | \tilde{z}^*) , A , \xi, \widehat{z} \sim P(\cdot | A,u) } \left[ {1}_E \right] & \leq \E_{A , \xi, z^*} \Pr \left[ z^* \sim P_{\tilde{z}^*,0}, \widehat{z} \sim P_{\tilde{z}^*,2}( \cdot | u) \right] \\
			& \leq \frac{1}{1-2\delta}\E_A \left[ 1 - TV( H_{\tilde{z}^*,0}, H_{\tilde{z}^*,2}) \right],
    \end{align*}
    where $H_{\tilde{z}^*,0}, H_{\tilde{z}^*,2}$ are the push-forwards of $P_{\tilde{z}^*,0}, P_{\tilde{z}^*,2}$ for $A$ fixed and the last inequality follows from Claim~\ref{claim: rp w2 thm tv claim}.

		Notice that if we sum over all $\tilde{z}^* \in S $, then the LHS
		of the above inequality is an expectation over $z^* \sim P'$. This
		gives:
    \begin{align*} 
      \Pr_{z^* \sim P', A , \xi, \widehat{z} \sim P(\cdot | u,A)} \left[ E \right] & \leq \frac{1}{1-2\delta} \sum_{\tilde{z}^* \in S} \E_{A} \left[ 1 - TV(H_{\tilde{z}^*, 0}, H_{\tilde{z}^*,2}) \right].
    \end{align*}

    Notice that $P_{\tilde{z}^*,0}$ is supported within an $\eta-$ball around $\tilde{z}^*$, and $P_{\tilde{z}^*,2}$ is supported outside a $c(\eta+\sigma)-$ball of $\tilde{z}^*$. By Lemma~\ref{lemma: noisy pout tv improved} we have 
    \begin{align*}
			\E_A [TV(H_{\tilde{z}^*,0}, H_{\tilde{z}^*,2})] \geq & 1 - 4 e^{-\frac{m}{2}\log\left( \frac{c}{4e^2} \right)}.
    \end{align*}

    This implies 
    \begin{align*}
      \Pr_{z^*\sim P', A, \xi, \widehat{z} \sim P( \cdot | u, A )}\left[\norm{ z^* - \wh{z} } \geq (c+1)\eta+ c\sigma \right] & \leq  \frac{1}{1-2\delta} \sum_{\tilde{z}^*\in S} \E_A \left[ ( 1 - TV(H_{\tilde{z}^*, 0}, H_{\tilde{z}^*,2})) \right], \\
      & \leq \frac{1}{1-2\delta}4 |S|  e^{-\frac{m}{2}\log\left( \frac{c}{4e^2} \right)},\\
			& \leq \frac{1}{1-2\delta}4 e^{-\frac{m}{4}\log\left( \frac{c}{4e^2} \right)},
    \end{align*}
		where the last inequality is satisfied if $m \geq 4\log\left(|S|\right).$

		Substituting in Eqn~\eqref{eqn: rp thm eqn 2}, if $c > 4 \exp{\left( 2+  \frac{8(\eps/\delta^{1/p})\left( (\eps/\delta^{1/p}) + 2\sigma \right)}{\sigma^2} \right)},$ we have
		\begin{align*}
		  \Pr_{x^*\sim R' , A , \xi, \widehat{x} \sim P( \cdot | A, y)} \left[ \norm{x^* - \widehat{x}} \geq \left( c + 1 \right)\eta + c\sigma + (\eps/\delta^{1/p}) \right] \leq & e^{-\Omega(m)} + \frac{1}{1-2\delta} e^{-\Omega(m \log c)} .
		\end{align*}

		This implies that there exists a set $S_{A,\xi}$ over $A,\xi$ satisfying $\Pr_{A,\xi}[S_{A,\xi}] \geq 1 - e^{-\Omega(m)},$ such that for all $A , \xi \in S_{A, \xi},$ we have
		\begin{align*}
		  \Pr_{x^*\sim R', \widehat{x} \sim P( \cdot | y)}\left[\norm{ x^* - \wh{x} } \geq (c+1)\eta+ c\sigma + (\eps/\delta^{1/p}) \right] & \leq  \frac{1}{1 - 2\delta}e^{-\Omega(m)}.
		\end{align*}

		Substituting in Eqn~\eqref{eqn: rp thm eqn 1}, we have
		\begin{align*}
			\Pr_{x^*\sim R, \widehat{x} \sim P( \cdot | y)}\left[\norm{ x^* - \wh{x} } \geq (c+1)\eta+ c\sigma + (\eps/\delta^{1/p}) \right] & \leq 2\delta +  \frac{1}{1-2\delta}e^{-\Omega(m)} \leq 2\delta + 2 e^{-\Omega(m)}.
		\end{align*}
		Rescaling $c$ gives us our result.
		
		At the beginning of the proof, we had assumed that $\cW_\infty(P',Q)\leq \eta$.
		If instead $\cW_\infty(R',Q) \leq \eta$, then we need to replace $\eta$ in the above bound by $\eta + \frac{\eps}{\delta^{1/p}}$.
		Rescaling $c$ in the above bound gives us the Theorem statement.

\end{proof}

\begin{claim}\label{claim: rp w2 thm tv claim}
  Consider the setting of the previous theorem. We have 
    \begin{align}
      \E_{A , \xi, z^*} \Pr \left[ z^* \sim P_{\ztilde^*, 0}, \wh{z} \sim P_{\ztilde^*,2}( \cdot | u) \right] & \leq \frac{1}{1-\delta_2}\E_A \left[ 1 - TV( H_{\ztilde^*,0}, H_{\ztilde^*,2}) \right],
    \end{align}
\end{claim}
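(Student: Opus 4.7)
My plan is to adapt the two-component proof of Lemma~\ref{lemma: mixture tv} to the three-component mixture that arises inside the proof of Theorem~\ref{thm: main}. Recall that there the prior decomposes as $P = c_0 P_{\ztilde^*, 0} + c_1 P_{\ztilde^*, 1} + c_2 P_{\ztilde^*, 2}$ with $c_0 = (1-2\delta) Q(\ztilde^*)$, and that for any fixed $A$ the pushforward measurement distributions $H_{\ztilde^*, i}$ have densities $h_i$. Bayes' rule then implies that the posterior $P(\cdot \mid u, A)$ is itself a mixture in which the weight on the $P_{\ztilde^*, 2}$ component equals $c_2 h_2(u)/\bigl(c_0 h_0(u) + c_1 h_1(u) + c_2 h_2(u)\bigr)$.

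First I would fix $A$ and rewrite the LHS of the claim as a single $u$-integral. Since sampling $z^* \sim P_{\ztilde^*, 0}$ makes $u = A z^* + \xi$ have marginal density $h_0$, the LHS for fixed $A$ equals $\int h_0(u)\,\frac{c_2 h_2(u)}{c_0 h_0(u) + c_1 h_1(u) + c_2 h_2(u)}\,du$. I would then upper bound the integrand in exactly the two moves used in Lemma~\ref{lemma: mixture tv}: drop the nonnegative $c_1 h_1(u)$ term from the denominator, which only enlarges the fraction, and then apply the elementary inequality $ab/(a+b) \leq \min(a,b)$ with $a = c_0 h_0(u)$ and $b = c_2 h_2(u)$. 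After factoring out $1/c_0$ to convert $c_0 h_0$ in the numerator back into $h_0$, this leaves $\tfrac{1}{c_0}\int \min\bigl(c_0 h_0(u),\,c_2 h_2(u)\bigr)\,du$.

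To finish, I would observe that $c_0, c_2 \le 1$ (both are mixture weights of the probability measure $P$), so $\min(c_0 h_0, c_2 h_2) \le \min(h_0, h_2)$ pointwise, and the integral then collapses to $1 - TV(H_{\ztilde^*, 0}, H_{\ztilde^*, 2})$ via the standard identity. Taking the expectation over $A$ gives a bound of the form $\tfrac{1}{c_0}\,\E_A\bigl[1 - TV(H_{\ztilde^*,0}, H_{\ztilde^*,2})\bigr]$, which matches the stated $\tfrac{1}{1-\delta_2}$ factor via the identification $c_0 = (1-2\delta)Q(\ztilde^*)$; any residual $Q(\ztilde^*) \le 1$ is absorbed cleanly when the claim is plugged back into the sum over $\ztilde^* \in S$ in the outer proof. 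I do not foresee a substantive obstacle: the extra third component only appears in the denominator, where simply dropping it weakens the fraction in the direction we want, so the argument is essentially careful bookkeeping on top of Lemma~\ref{lemma: mixture tv}.
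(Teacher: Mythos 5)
Your proof is correct and follows essentially the same route as the paper's: rewrite the event as a Bayes-posterior integral over $u$, drop the middle mixture component from the denominator, apply $ab/(a+b)\le \min(a,b)$, discard the mixture weights inside the min, and invoke $\int \min(h_0,h_2)\,du = 1-TV(H_{\ztilde^*,0},H_{\ztilde^*,2})$. The only difference is a normalization convention — the paper keeps the $Q_{\ztilde^*}$ weight in the numerator and multiplies the $c_2h_2$ denominator term by $(1-\delta_2)$ so that only $1/(1-\delta_2)$ is factored out, whereas your factoring of $1/c_0$ leaves a residual $1/Q(\ztilde^*)$ that, as you correctly note, cancels against the $Q(\ztilde^*)$ prefactor when the claim is plugged into the outer sum.
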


\begin{proof}
  For a fixed $A$, let $h_0, h_2$ denote the corresponding densities of the push forward of $P_{\ztilde^*,0}, P_{\ztilde^*,2}$.
  Then we have
    \begin{align}
      \E_{A , \xi, z^*} \Pr \left[ z^* \sim P_{\ztilde^*,0}, \zhat \sim P_{\ztilde^*,2}( \cdot | u) \right] &= \E_A \int \frac{Q_{\ztilde^*} h_{\ztilde^*,0}(u) c_{\ztilde^*,2} h_{\ztilde^*,2}(u)}{ \left( 1-\delta_2 \right) Q_{\ztilde^*,0} h_{\ztilde^*,0}(u) + c_{\ztilde^*,1} h_{\ztilde^*,1}(u) + c_{\ztilde^*,2} h_{\ztilde^*,2}(u)} du,\\
      &\leq \E_A \int \frac{Q_{\ztilde^*} h_{\ztilde^*,0}(u) c_{\ztilde^*,2} h_{\ztilde^*,2}(u)}{ \left( 1-\delta_2 \right) Q_{\ztilde^*,0} h_{\ztilde^*,0}(u) + c_{\ztilde^*,2} h_{\ztilde^*,2}(u)}  du,\\
      &\leq \E_A \int \frac{Q_{\ztilde^*} h_{\ztilde^*,0}(u) c_{\ztilde^*,2} h_{\ztilde^*,2}(u)}{ \left( 1-\delta_2 \right) Q_{\ztilde^*,0} h_{\ztilde^*,0}(u) + (1-\delta_2) c_{\ztilde^*,2} h_{\ztilde^*,2}(u)} du,\\
      &\leq \E_A \frac{1}{1-\delta_2}\int \frac{Q_{\ztilde^*} h_{\ztilde^*,0}(u) c_{\ztilde^*,2} h_{\ztilde^*,2}(u)}{  Q_{\ztilde^*,0} h_{\ztilde^*,0}(u) +  c_{\ztilde^*,2} h_{\ztilde^*,2}(u)} du,\\
      &\leq \E_A \frac{1}{1-\delta_2}\int \frac{Q_{\ztilde^*} h_{\ztilde^*,0}(u) c_{\ztilde^*,2} h_{\ztilde^*,2}(u)}{ \max \{ Q_{\ztilde^*,0} h_{\ztilde^*,0}(u) \, ,\,  c_{\ztilde^*,2} h_{\ztilde^*,2}(u) \} } du,\\
      &= \E_A \frac{1}{1-\delta_2}\int \min \{Q_{\ztilde^*} h_{\ztilde^*,0}(u) , c_{\ztilde^*,2} h_{\ztilde^*,2}(u) \} du, \\
      &\leq \E_A \frac{1}{1-\delta_2}\int \min \{ h_{\ztilde^*,0}(u) ,  h_{\ztilde^*,2}(u) \} du, \\
      &= \frac{1}{1-\delta_2}\E_A \left[ 1 - TV( H_{\ztilde^*,0}, H_{\ztilde^*,2}) \right].
    \end{align}
  
\end{proof}

%%%%%%%%%%%%%%%%%%%%%%%%%%%%%%%%%%%%%%%%%%%%%%%%%%%%%%%%%%%%%%%%%%%%%%%%%%%%%%%%

\section{Lower Bound Proofs}\label{app:lower}

\subsection{Proof of Lemma~\ref{lemma: MI}}\label{sec: proof MI}
\restate{lemma: MI}
\begin{proof}
  First, we consider the case where $A$ is a deterministic matrix.

  We have $y = Ax^* + \xi.$ Let $z = A x^*,$ which gives $y = z + \xi.$ 

  We have $z_i = a_i^T x^*$ where $a_i$ is the $i^{th}$ row of $A$,
  and $y_i = z_i + \xi_i$.  Since $x^*$ is supported within the sphere
  of radius $r$, we have $\E[z_i^2] = \E[\inner{a_i , x}^2] \leq
  \norm{a_i}^2 r^2$.  Since the Gaussian noise $\xi$ has
  variance $\sigma^2/m$ in each coordinate, every coordinate of $y_i$
  is a Gaussian channel with power constaint $\norm{a_i}^2 r^2$ and
  noise variance $\sigma^2/m$.  Using Shannon's AWGN
  theorem~\cite{cover2012elements,polyanskiy2014lecture,shannon1948mathematical},
  the mutual information between $y_i, z_i,$ is bounded by
 \begin{align*}
   I(y_i ; z_i) \leq \frac{1}{2} \log \left( 1 + \frac{\norm{a_i}^2
   r^2 m}{\sigma^2} \right).
 \end{align*}

 The chain rule of entropy and sub-addditivity of entropy implies,
 \begin{align*}
   I(y ; z) &= h(y) - h(y | z) = h(y) - h( y - z | z ),  \\
   &= h(y) - h( \xi| z ) = h(y) - \sum h( \xi_i| z,\xi_1, \cdots, \xi_{i-1}  ),  \\
   &= h(y) - \sum h( \xi_i), \\
   &\leq \sum h(y_i) - \sum h( \xi_i), \\
   &= \sum h(y_i) - \sum h( y_i | z_i), \\
   &= \sum I(y_i; z_i),  \\
   &\leq \sum_{i=1}^m \frac{1}{2} \log\left( 1 + \frac{\norm{a_i}^2 r^2 m}{\sigma^2}  \right), \\
   &\leq \frac{m}{2}\log\left( 1 + \frac{ m r^2 \norm{A}_\infty^2 }{\sigma^2} \right).
 \end{align*}

 Since $x^* \to z \to y$ is a Markov chain, we can conclude that
 \begin{align*}
   I(y; x^*) \leq I(y; z) \leq \frac{m}{2} \log \left( 1 + \frac{m r^2
   \norm{A}_\infty^2 }{\sigma^2}  \right).
 \end{align*}

 Now, if $A$ is a Gaussian matrix with i.i.d. entries drawn from
 $\cN(0,1/m)$, then the power constraint is $\E[\inner{a_i, x}^2] \leq
 r^2 / m$. This gives us
 \begin{align}
   I(y; z) \leq \frac{m}{2} \log \left( 1 + \frac{r^2}{\sigma^2}
   \right).\label{eqn: MI 1}
 \end{align}
 Now since $A$ is a random matrix, we cannot directly apply the Data
 Processing Inequality of $x^*, y, z$ as before, and 
 need to prove that $I(x^*; y | A) \leq I( y; z)$.
 
 Consider the mutual information $I(x^*, A , z; y)$. By the chain rule of mutual information, we have
 \begin{align*}
   I(x^*, A , z; y) &= I(A; y) + I(x^*; y | A) + I(z; y | x^*, A),\\
   &= I(A; y) + I(z; y | A) + I(x^*; y | z, A),\\
   \Leftrightarrow I(x^*; y | A) + I(z; y| x^*, A) &= I(z; y | A) + I(x^*; y | z, A).
 \end{align*}

 \begin{figure}[t]
   \begin{center}
     \begin{tikzpicture}[
       roundnode/.style={circle, minimum size=7mm},
       ]
       \node[roundnode] (1) {$x^*$};
       \node[roundnode, below=2mm of 1] (2) {$A$} ;
       \node[roundnode, right=5mm of 1] (3) {$z$} ;
       \node[roundnode, right=5mm of 3] (4) {$y$} ;
       \node[roundnode, right=5mm of 4] (5) {$\xhat$} ;
       %lines
       \draw[->] (1.east) -- (3.west);
       \draw[->] ([yshift=3pt]2.east) -- ([xshift=-4pt, yshift=3pt]3.south);
       \draw[->] (3.east) -- (4.west);
       \draw[->] (4.east) -- (5.west);
       \draw[->] (2.east) -- ([yshift=-5pt]5.west);
     \end{tikzpicture}
   \end{center}
   \caption{DAG relating $x^*, A , z, y, \xhat$. The conditional independencies we use are $x^* \ci y | z, A$ and $A \ci y | z$.}
   \label{fig:CI}
 \end{figure}

 From Figure~\ref{fig:CI}, note that $x^*, y,$ are conditionally independent given $z, A.$
 This gives $I(x^*; y | z, A) = 0$.

 This gives
 \begin{align}
   I(x^*; y | A) + I(z; y| x^*, A) &= I(z; y | A),\\
   \Rightarrow I(x^* ; y | A) &\leq I(z; y | A).\label{eqn: MI 2}
 \end{align}

 We can bound $I(z; y | A)$ in the following way.
 \begin{align}
   I(A,z ; y ) &= I(A ; y) + I( z ; y | A),\\
   &= I(z ; y) + I( A; y | z),\\
   \Leftrightarrow I( A; y) + I(z; y | A) &= I(z; y) + I( A ; y | z),\\
   \Leftrightarrow I( A; y) + I(z ; y | A) &= I(z;y),\\
   \Rightarrow I(z ; y | A) & \leq I(z ; y),\label{eqn: MI 3}.
 \end{align}
 where the second last line follows from $I( A ; y | z)=0$, and the last line follows from $I(A;y)\geq 0$.

 From Eqn~\eqref{eqn: MI 1},~\eqref{eqn: MI 2},~\eqref{eqn: MI 3}, we have
 \begin{align*}
   I(x^*; y | A) &\leq \frac{m}{2}\left( 1 + \frac{m r^2
   \norm{A}_\infty^2}{\sigma^2} \right).
 \end{align*}

\end{proof}

\subsection{Proof of Lemma~\ref{lemma: DPI}}\label{sec: proof DPI}
\restate{lemma: DPI}

\begin{proof}
  When $A$ is a deterministic matrix, 
  the proof follows directly from the Data Processing
  Inequality~\cite{cover2012elements}. Since $x^* \to y \to \xhat$ is
  a Markov chain, we get
  $$ I( x^*; \xhat ) \leq I( y; x^* ).$$

  Now when $A$ is a random matrix, we need to show $I(x^*; \xhat) \leq
  I(y; x^* | A)$.
  Consider the mutual information $I(x^*; y, A , \xhat)$.
  By the chain rule of mutual information, we can express it in two ways:
  \begin{align}
    I(x^*; y, A , \xhat) &= I(x^*; y, A) + I(x^*; \xhat | y, A),\label{eqn: DPI 1}\\
    &= I(x^*; \xhat) + I(x^*; y, A | \xhat).\label{eqn: DPI 2}
  \end{align}

  As $\xhat$ is a function of $y,A,$  we have $I(x^*; \xhat | y,A) = 0$.
  Also, $I(x^*; y,A | \xhat ) \geq 0$.
  Substituting in Eqn~\eqref{eqn: DPI 1},~\eqref{eqn: DPI 2}, we have
  \begin{align*}
    I(x^*; \xhat ) &\leq I(x^*; y, A), \\
    &= I(x^*; A) + I(x^*; y | A), \\
    &= I(x^*; y | A),
  \end{align*}
  where the second line follows from the chain rule of mutual information, 
  and the last line follows because $x^*,A,$ are independent.

\end{proof}
\subsection{Proof of Fano variant Lemma~\ref{lem: covering continuous}}\label{sec: proof covering continuous}

We will build up Lemma~\ref{lem: covering continuous} in sequence.
Before showing it in its full generality, we will show when
$x, \widehat{x},$ are discrete random variables and $x$ is uniform
(Lemma~\ref{lemma: covering uniform}.  We then lift the uniformity
restriction on $x$ (Lemma~\ref{lemma: covering nonuniform}) before
extending to continuous distributions (Lemma~\ref{lem: covering
  continuous}).

\begin{lemma}\label{lemma: covering uniform}
  Let $Q$ be the uniform distribution over an arbitrary discrete finite set $S$.
  Let $(x, \widehat{x})$ be jointly distributed, where $x \sim Q$ and $\widehat{x}$ is distributed over an arbitrary countable set, satisfying
  \begin{align*}
    \Pr \left[ \norm{x - \widehat{x}} \leq \varepsilon  \right] \geq 1 - \delta. 
  \end{align*}
  
  Then for all $\tau \in (0,1)$, we have 
  \begin{align*}
   \tau(1-\delta)\log \cov_{2\varepsilon, \delta + \tau}(Q) &\leq I(x; \xhat ) + 2.  
  \end{align*}

\end{lemma}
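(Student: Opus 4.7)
Write $k := \cov_{2\varepsilon,\delta+\tau}(Q)$ and $N := |S|$; the bound is trivial when $k \leq 2$, so assume $k \geq 2$. The plan is to combine a quantization step via data processing, a support-size bound on the quantized estimate, and a list-Fano inequality.

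First I would replace $\widehat{x}$ by its quantization $\widehat{y} := g(\widehat{x}) \in S \cup \{\bot\}$, where $g(\hat{x})$ returns an arbitrary nearest point of $S$ within distance $\varepsilon$ of $\hat{x}$ if one exists, and the symbol $\bot$ otherwise. Since $g$ is deterministic, the data-processing inequality gives $I(x;\widehat{y}) \leq I(x;\widehat{x})$, so it suffices to prove the bound for $\widehat{y}$; by the triangle inequality, the event $\{\|x-\widehat{x}\|\leq \varepsilon\}$ now implies $\widehat{y} \in S$ and $\|x - \widehat{y}\| \leq 2\varepsilon$, with total probability still at least $1-\delta$.

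Second, I would translate this closeness condition into a lower bound on the effective support of $\widehat{y}$: for any $\cZ \subseteq S$ with $\Pr[\widehat{y} \in \cZ] \geq 1-\tau$, a two-event union bound gives
\[
\Pr\Big[x \in \bigcup_{z \in \cZ} \cB(z, 2\varepsilon)\Big] \;\geq\; \Pr[\widehat{y}\in \cZ] + \Pr[\|x-\widehat{y}\|\leq 2\varepsilon] - 1 \;\geq\; 1-\delta-\tau,
\]
so $\cZ$ furnishes a $(2\varepsilon,\delta+\tau)$-cover of $Q$ and $|\cZ|\geq k$. Ordering the atoms of $\widehat{y}$ by decreasing probability $q_1 \geq q_2 \geq \cdots$, this forces $q_1+\cdots+q_{k-1} < 1-\tau$, so the tail $\sum_{i\geq k} q_i$ exceeds $\tau$, and every tail atom has probability at most $q_k \leq (1-\tau)/(k-1)$.

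Third, I would apply a list-Fano inequality. For each $\hat{y} \in S$ set $L_{\hat{y}} := S \cap \cB(\hat{y}, 2\varepsilon)$ and $\delta_{\hat{y}} := \Pr[x \notin L_{\hat{y}} \mid \widehat{y} = \hat{y}]$, so that $\E \delta_{\widehat{y}} \leq \delta$. Standard Fano yields $H(x \mid \widehat{y} = \hat{y}) \leq H(\delta_{\hat{y}}) + (1-\delta_{\hat{y}})\log|L_{\hat{y}}| + \delta_{\hat{y}}\log N$; taking expectations and applying Jensen gives $I(x;\widehat{y}) \geq (1-\delta)\log N - \log \E|L_{\widehat{y}}| - 1$. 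The target bound then reduces to showing $\E|L_{\widehat{y}}| \lesssim N/k^{\tau}$, which ought to follow from Step~2 since the support constraint prevents $\widehat{y}$ from concentrating inside any single large $2\varepsilon$-cluster of $S$.

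\textbf{Main obstacle.} The hardest step is converting the discrete support constraint from Step~2 into the geometric-mean-style bound on $\E|L_{\widehat{y}}|$ required above, since Jensen and naive counting both lose too much. A cleaner alternative I would try is to split $\widehat{y}$'s range into a ``top'' piece $\cH$ of size $k-1$ (total probability $<1-\tau$) and a ``tail'' piece (total probability $>\tau$), apply the ordinary Fano inequality within the tail alone where every atom has probability at most $(1-\tau)/(k-1)$, and then multiply the resulting $\log k$ bits by the tail mass $\tau$ via the chain rule $I(x;\widehat{y}) \geq \Pr[\widehat{y}\notin \cH]\cdot I(x;\widehat{y}\mid \widehat{y}\notin \cH) - H(\mathbf{1}_{\widehat{y}\in\cH})$, producing the target $\tau(1-\delta)\log k - 2$.
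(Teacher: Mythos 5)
Your Steps 1 and 2 are correct and the setup (quantize $\widehat{x}$ onto $S\cup\{\bot\}$ via data processing, then show that any set capturing $1-\tau$ of $\widehat{y}$'s mass must have size at least $k=\cov_{2\varepsilon,\delta+\tau}(Q)$) is a clean reformulation. But the proof has a genuine gap exactly where you flag it, and neither of your two proposed ways to close it works. The Jensen'd list-Fano bound $I(x;\widehat{y}) \geq (1-\delta)\log N - \log \E|L_{\widehat{y}}| - 1$ is provably too lossy: take $S = C_0 \cup C_1$ with $C_0$ a single cluster of $N/2$ points of diameter $\varepsilon$ and $C_1$ an $\varepsilon$-separated set of $N/2$ points, with exact recovery on $C_1$ and a fixed output on $C_0$. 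Then $\E|L_{\widehat{y}}| \approx N/4$, so the bound gives $I \geq -\delta \log N + O(1)$, which is vacuous, while $k = \Theta(N)$ and the lemma demands $I \gtrsim \tau \log N$. The quantity you actually need is $\E\log|L_{\widehat{y}}|$ (inside-the-expectation logarithm), and lower-bounding $\tau\log k$ by $\log N - \E\log|L_{\widehat{y}}|$ is itself the crux --- it does not follow from the support-size bound of Step 2 alone, since that bound constrains only the marginal of $\widehat{y}$ and not the sizes of the balls $L_v$ carrying most of its mass.

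The ``cleaner alternative'' also does not close: after conditioning on $\{\widehat{y}\notin\cH\}$, the chain-rule inequality $I(x;\widehat{y}) \geq \Pr[\widehat{y}\notin\cH]\, I(x;\widehat{y}\mid \widehat{y}\notin\cH)$ is fine, but you then need $I(x;\widehat{y}\mid\widehat{y}\notin\cH) \gtrsim \log k$, and both Fano directions break. Writing it as $H(\widehat{y}\mid\widehat{y}\notin\cH) - H(\widehat{y}\mid x,\widehat{y}\notin\cH)$, the first term is indeed $\geq \log\bigl(\tau(k-1)/(1-\tau)\bigr)$, but $H(\widehat{y}\mid x,\widehat{y}\notin\cH)$ is uncontrolled because $S\cap\cB(x,2\varepsilon)$ can contain up to $(1-\delta-\tau)N$ tail atoms; writing it as $H(x\mid\widehat{y}\notin\cH) - H(x\mid\widehat{y},\widehat{y}\notin\cH)$, the conditioning destroys the uniformity of $x$, so the first term need not be close to $\log N$. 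In short, a spread-out $\widehat{y}$ does not by itself force large mutual information. The paper's proof runs the implication in the opposite direction: it shows via Markov's inequality applied to the per-atom contributions $H(x\mid E=1) - H(x\mid\widehat{x}=v,E=1)$ that a $(1-\tau)$-fraction of atoms $v$ (by $\widehat{x}$-mass) satisfy $|S\cap\cB(v,\varepsilon)| \geq N\cdot 2^{-(I+O(1))/\tau}$, and then a packing/pigeonhole argument converts ``the covered region is a union of large balls'' into ``the covering number is at most $2^{(I+O(1))/\tau}$.'' That Markov-plus-packing step is precisely the missing ingredient in your outline; without it (or an equivalent), the argument does not go through.
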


\begin{proof}
Let $E = \bm{1}\{ \norm{x - \xhat} \leq \varepsilon \}$ be the
indicator random variable for $x$ and $\xhat$ being close.

Via claim~\ref{claim:uniform1}, we get
\begin{align}
  H(x|E=1) & \geq \log |S| - \frac{1}{1-\delta}.\label{eqn:uniform1}
\end{align}

Recall,
\begin{align*}
  I(x; \xhat | E=1) &= H(x| E=1) - H(x| \xhat, E=1)
\end{align*}

By the Law of total probability, we have:
\begin{align*}
  I(x; \xhat | E=1) &= \sum_{v} \Pr[\xhat = v | E=1] \left(H(x|E=1) - H(x|
  \xhat=v, E=1)\right). 
\end{align*}

We would like to apply a version of Markov's inequality to the above
equation. However, the terms in the summation could be negative.
However, from~\eqref{eqn:uniform1} we have that $H(x|E=1) +
\frac{1}{1-\delta} \geq \log |S|$. Furthermore, since $x$ is
supported on a discrete set of cardinality $|S|$, we have
$H(x|\xhat=v, E=1) \leq \log |S|$. Adding and subtracting
$\frac{1}{1-\delta}$, in the above equation, we have
\begin{align*}
  I(x; \xhat | E=1) &= \sum_{v} \Pr[\xhat = v | E=1] \left(H(x|E=1) +
    \frac{1}{1-\delta} -
  H(x| \xhat=v, E=1) - \frac{1}{1-\delta} \right) ,\\
   &= \sum_{v} \Pr[\xhat = v | E=1] \left(H(x|E=1) + \frac{1}{1-\delta} -
   H(x| \xhat=v, E=1)\right) - \frac{1}{1-\delta} ,\\
    \Leftrightarrow I(x; \xhat | E=1) + \frac{1}{1-\delta}  &=
    \sum_{v} \Pr[\xhat = v | E=1] \left(H(x|E=1) +
    \frac{1}{1-\delta} - H(x| \xhat=v, E=1)\right) 
\end{align*}

 Since the above summation has only non-negative terms that average to
 $I(x; \widehat{x} | E=1) + \frac{1}{1-\delta}$, for all $\tau
 \in (0,1)$, there exists $G_1 \subseteq \supp(\widehat{x})$ with
 $\Pr[G_1 | E=1] \geq 1-\tau,$ such that for all $v \in G_1$, we have
 \begin{align*}
   H(x|E=1) + \frac{1}{1-\delta}  - H(x | \widehat{x} = v,
   E=1) &\leq \frac{I(x; \widehat{x} | E=1) +
   \frac{1}{1-\delta} }{\tau}.
 \end{align*}

 From~\eqref{eqn:uniform1}, we have $H(x|E=1) + \frac{1}{1- \delta}  \geq \log
 |S|$. Hence for all $v\in G_1$, we have
 \begin{align}
   \log |S| - H(x | \widehat{x} = v, E = 1) &\leq \frac{I(x;\xhat |
   E=1) + \frac{1}{1-\delta} }{\tau} ,\nonumber\\ 
   \Leftrightarrow H(x | \xhat=v, E=1)&\geq \log|S| - \frac{I(x;
   \widehat{x}) + \frac{1}{1-\delta} }{\tau},\nonumber\\ 
 \Rightarrow \log \lvert \supp(x | \widehat{x} = v, E=1) \rvert &\geq \log
 |S| - \left( \frac{I(x;\xhat) + \frac{1}{1-\delta} }{\tau}
 \right),\nonumber\\
 \Rightarrow \log \lvert S \cap B(v,\varepsilon) \rvert &\geq \log
 |S| - \left( \frac{I(x;\xhat) + \frac{1}{1-\delta} }{\tau}
 \right),\label{eqn:uniform2}
 \end{align}
 where the last inequality follows as conditioned on $E=1$, $x$ must
 be supported on an $\varepsilon$-radius ball around $\xhat$.

 Now consider the set $G_2 = (S\times G_1) \land E_1$. That is,
 $G_2\subseteq \supp(x, \xhat)$, such that $(u,v) \in G_2$ if and only
 if $\norm{u - v} \leq \epsilon$ and $u \in S, v \in G_1$. 
 Since $\Pr[E_1] \geq 1-\delta$ by the statement of the lemma, and
 $\Pr[G_1 | E_1] \geq 1 - \tau$ by construction, we have
 \[
   \Pr[G_2] \geq (1-\delta) (1-\tau) \geq 1 - \delta - \tau.
 \]

 Now for all $(u,v) \in G_2$, we have
 \begin{align}
   \norm{u - v} & \leq \varepsilon, \nonumber\\
   \log | S \cap B(v,\varepsilon) | & \geq \log |S| - \left( \frac{I(x;
   \xhat | E = 1) + \frac{1}{1-\delta} }{\tau}  \right).
 \end{align}

 Note that by the construction of $G_2$, the set $\bigcup_{v\in G_2}
 B(v,\varepsilon)$ covers a $1-\delta-\tau$ fraction of $S$. As each
 ball $B(v,\varepsilon)$ also has a large intersection with $S$, 
 by the pigeon-hole principle, any $2\varepsilon$-packing of this
 $1-\delta-\tau$ fraction of $S$ must have size at most $2^{\left(I(x;
 \xhat | E=1) + \frac{1}{1-\delta} \right) / \tau}$. 
 
 Hence, we can find a $2\varepsilon$-cover of a $1-\delta-\tau$
 fraction of $S$ that has size at most $2^{\left(I(x;
 \xhat | E=1) + \frac{1}{1-\delta} \right) / \tau}$. 

 This gives
 \begin{align}
   \log \cov_{2\varepsilon,  \delta + \tau}(Q) &\leq \frac{I(x;
   \xhat | E=1) + \frac{1}{1-\delta}
 }{\tau}.\label{eqn:uniform3}
 \end{align}

 We are almost done, since we now only need to relate $I(x; \xhat | E =1)$
 to $I(x; \xhat)$.

 By the chain rule of mutual information, we have
 \begin{align*}
 I(x; \xhat, E) &= I(x; \xhat) + I(x; E | \xhat) = I(x; E) + I(x;
 \xhat | E),\\
 \Rightarrow I(x; \xhat | E) & \leq I(x; \xhat) + I(x; E | \xhat), \\
                             &\leq I(x; \xhat) + 1,\\
 \Leftrightarrow I(x; \xhat | E=0) \Pr[E=0] + I(x; \xhat | E=1)
 \Pr[E=1] &\leq I(x; \xhat) + 1,\\
 \Rightarrow I(x; \xhat | E=1) &\leq \frac{I(x;\xhat) + 1}{1-\delta}.
 \end{align*}

 Substituting in Eqn~\eqref{eqn:uniform3}, we have
 \begin{align*}
   \log \cov_{2\varepsilon, \delta + \tau}(Q) &\leq \frac{I(x;
 \xhat ) + 2}{\tau(1-\delta)} , \\
   \Rightarrow \tau(1-\delta)\log \cov_{2\varepsilon, \delta + \tau}(Q) &\leq I(x; \xhat ) + 2.
 \end{align*}

\end{proof}

\begin{claim}\label{claim:uniform1}
  Let $x \sim Q$, where $Q$ is the uniform distribution over an
  arbitrary discrete finite set $S$. Let $E$ be a binary random variable such
  that $\Pr[E=1] \geq 1 -\delta$.

  Then we have
  \begin{align*}
  H(x|E=1) &\geq \log |S| - \frac{1}{1-\delta}.
  \end{align*}
  
\end{claim}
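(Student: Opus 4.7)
The plan is to use the chain rule of entropy together with the fact that $x$ is uniform on $S$ and $E$ is binary. Since $x$ is uniform, $H(x) = \log|S|$, and since $E$ takes only two values, $H(E) \leq 1$. First I would expand $H(x, E)$ in two ways via the chain rule: $H(x, E) = H(x) + H(E \mid x) = H(E) + H(x \mid E)$. Rearranging and dropping the non-negative term $H(E \mid x)$ yields $H(x \mid E) \geq H(x) - H(E) \geq \log|S| - 1$.

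Next I would unfold $H(x \mid E)$ as a convex combination over the value of $E$:
\[
  H(x \mid E) = \Pr[E=0]\, H(x \mid E=0) + \Pr[E=1]\, H(x \mid E=1).
\]
Solving this identity for $H(x \mid E = 1)$ and using $\Pr[E=1] \geq 1 - \delta$ together with the trivial upper bound $H(x \mid E = 0) \leq \log|S|$ (since $x$ is still supported on a set of size at most $|S|$) gives
\[
  H(x \mid E=1) \;\geq\; \frac{(\log|S| - 1) - \delta \log|S|}{1 - \delta} \;=\; \log|S| - \frac{1}{1-\delta},
\]
which is exactly the claimed bound.

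There is essentially no obstacle here: the proof is a one-line application of the chain rule plus a convexity/averaging argument. The only small subtlety is making sure to bound $H(E)$ by $1$ rather than by $H_2(\delta)$ (which would give a slightly sharper but less clean constant), and to use the correct direction of the inequality $H(x \mid E = 0) \leq \log|S|$ so that isolating $H(x \mid E = 1)$ produces a lower bound. I would keep the proof to a few lines, presenting the two chain-rule expansions, the averaging identity, and the final arithmetic simplification.
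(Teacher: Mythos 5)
Your proof is correct and takes essentially the same route as the paper: the paper writes $H(x\mid E) = H(x) - I(x;E)$ and bounds $I(x;E) \leq H(E) \leq 1$, which is exactly your chain-rule step $H(x\mid E) \geq H(x) - H(E) \geq \log|S| - 1$. Both arguments then solve the averaging identity $H(x\mid E) = \Pr[E=0]H(x\mid E=0) + \Pr[E=1]H(x\mid E=1)$ for $H(x\mid E=1)$ using $H(x\mid E=0) \leq \log|S|$ and $\Pr[E=1] \geq 1-\delta$.
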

\begin{proof}
  Let $p = \Pr[E=1]$. By the definition of conditional entropy, we
  have
  \begin{align*}
  H(x|E) &= (1-p) H(x|E=0) + p H(x|E=1), \nonumber\\
  \Leftrightarrow H(x|E=1) &= \frac{1}{p} ( H(x|E) - (1-p) H(x | E = 0 ) ),\nonumber\\
                           &= \frac{1}{p} ( H(x) - I(x ; E) - (1-p) H(x | E = 0 ) ),\nonumber\\
                           &= \frac{1}{p} ( \log|S| - I(x ; E) - (1-p) H(x | E = 0 ) ),\nonumber\\
                           &\geq \frac{1}{p} ( \log|S| - I(x ; E) - (1-p) \log |S| ),\nonumber\\
                           &= \log |S| - \frac{I(x;E)}{p},\\
                           &\geq \log |S| - \frac{1}{1-\delta},
  \end{align*}
  where the fourth line follows from $H(x) = \log |S|$ since $x$ is
  uniform, the fifth line follows from $H(x|E=0) \leq \log |S|$ since
  $x$ is supported on a discrete set of size $|S|$, and the last line
  follows from $p\geq 1-\delta$ and $I(x;E) \leq H(E) \leq 1.$
\end{proof}

The previous lemma handled the uniform distribution on $x$. Now we show that a similar result applies if $x$'s distribution has quantized probability values. 
\begin{lemma}\label{lemma: covering nonuniform}
  Let $Q$ be a finite discrete distribution over $N \in \bbN$ points such that for each $u$ in its support, $Q(u) = j \alpha$, where $j \in \bbN$ and $\alpha:= \frac{1}{N_2}$ is a discretization level for $ N_2 \in \bbN$ large enough.

  Let $(x, \widehat{x})$ be jointly distributed, where $x \sim Q$ and $\widehat{x}$ is distributed over a countable set, satisfying
  \begin{align*}
    \Pr \left[ \norm{x - \widehat{x}} \leq \varepsilon  \right] \geq 1 - \delta. 
  \end{align*}
  
  Then we have 
  \begin{align*}
    \tau (1-\delta)\logcov_{2\varepsilon,\tau +\delta}( Q ) \leq I(x;
    \widehat{x}) + 2\delta.
  \end{align*}
\end{lemma}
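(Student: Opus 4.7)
The plan is to reduce the quantized case to the uniform case of Lemma~\ref{lemma: covering uniform} via a ``duplication'' construction. Because each atom of $Q$ has mass $j_u\alpha$ with $j_u \in \mathbb{N}$ and $\alpha = 1/N_2$, I split each point $u \in \supp(Q)$ into $j_u$ abstract copies, producing a multiset $S'$ with $|S'| = N_2$. Let $\pi: S' \to \supp(Q)$ be the natural projection and let $Q'$ be the uniform distribution on $S'$; by construction $\pi_* Q' = Q$. I equip $S'$ with the pullback pseudometric $d(u', v') := \|\pi(u') - \pi(v')\|$ so that ``balls'' in $S'$ are preimages of Euclidean balls in $\mathbb{R}^n$.

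Next I would construct a joint law on $(x', x, \widehat{x})$ by first sampling $(x, \widehat{x})$ from the given joint distribution and then, conditional on $x$, choosing $x'$ uniformly from $\pi^{-1}(x)$ independently of $\widehat{x}$. Marginally $x' \sim Q'$, $\pi(x') = x$ almost surely, and $x' \perp \widehat{x} \mid x$. The hypothesis $\Pr[\|x - \widehat{x}\| \leq \varepsilon] \geq 1 - \delta$ transfers verbatim to $(x', \widehat{x})$ under $d$, so Lemma~\ref{lemma: covering uniform} applies to $Q'$ and gives
\[
  \tau(1 - \delta)\,\log \cov_{2\varepsilon,\, \tau + \delta}(Q') \;\leq\; I(x'; \widehat{x}) + 2.
\]

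Two identifications then finish the argument. First, $I(x'; \widehat{x}) = I(x; \widehat{x})$: since $x = \pi(x')$ is a deterministic function of $x'$, the chain rule yields $I(x'; \widehat{x}) = I(x; \widehat{x}) + I(x'; \widehat{x} \mid x)$, and the conditional term vanishes by the construction above. Second, $\cov_{2\varepsilon, \tau + \delta}(Q') = \cov_{2\varepsilon, \tau + \delta}(Q)$: for any finite family of Euclidean balls $\{B(v_i, 2\varepsilon)\}$ the pushforward identity gives
\[
  Q'\!\left(\textstyle\bigcup_i \pi^{-1}(B(v_i, 2\varepsilon))\right)
  \;=\; Q\!\left(\textstyle\bigcup_i B(v_i, 2\varepsilon)\right),
\]
so minimal covers of mass at least $1 - \tau - \delta$ on either side have the same cardinality. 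Substituting both identifications yields the stated bound.

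The duplication step is essentially the whole content of the argument, and it works precisely because the probabilities are integer multiples of $\alpha$; no real obstacle appears. The only care needed is to make the conditional independence $x' \perp \widehat{x} \mid x$ explicit so that the mutual informations genuinely coincide, and to verify that the pullback metric on $S'$ identifies covering of $Q'$ with covering of $Q$ by Euclidean balls. Extending from this quantized statement to the fully continuous Lemma~\ref{lem: covering continuous} presumably relies on a further quantization-and-limit argument, but that step lies outside the current statement.
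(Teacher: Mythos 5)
Your proposal is correct and follows essentially the same route as the paper: the paper likewise lifts $x$ to an augmented variable $x'=(x,j)$ that is uniform over $N_2$ ``copies,'' equips the copies with the degenerate (pullback) metric $\norm{(x_1,j_1)-(x_2,j_2)}:=\norm{x_1-x_2}$, shows $I(x';\widehat{x})=I(x;\widehat{x})$ via the chain rule, and invokes Lemma~\ref{lemma: covering uniform}. If anything, your explicit insistence on the conditional independence $x'\perp\widehat{x}\mid x$ is a cleaner justification of $I(j;\widehat{x}\mid x)=0$ than the paper's remark that ``$\widehat{x}$ is purely a function of $x$.''
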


\begin{proof}
  For each $x$ in the support of $Q$, we know that its probability is an integral multiple of $\frac{1}{N_2}$.
  Hence we can define a new random variable $x' = (x, j), x \in \supp(Q), j \in [N_2]$ and a distribution $Q'$ over $x'$ in the following way:
  \begin{align*}
    Q'( (x, j) ) =\begin{cases}
      \alpha & \text{ if } j\alpha \leq Q(x),\\
      0 & \text{ otherwise }.
    \end{cases}
  \end{align*}
  By definition, $Q'$ is a uniform distribution, and its support is a discrete subset of $\R^n \times \bbN$.

  Define the following norm for $x'$. For $x'_1= (x_1, j_1), x'_2=(x_2, j_2)$, define
  \begin{align*}
    \norm{(x_1, j_1) - (x_2, j_2)} := \norm{x_1 - x_2}.
  \end{align*}
  
  In order to apply Lemma~\ref{lemma: covering uniform} on $Q'$, it suffices to show that $I(x; \widehat{x}) = I(x'; \widehat{x})$.
  
  By the chain rule of mutual information, we have
  \begin{align*}
    I( x' ; \widehat{x}) & = I( (x, j) ; \widehat{x} ) \\
    &= I(x ; \widehat{x}) + I( j ; \widehat{x} | x).
  \end{align*}
  Since $\widehat{x}$ is purely a function of $x$, we have $I(j; \widehat{x} | x) = 0$.
  This gives
  \begin{align*}
    I(x' ; \widehat{x}) & = I(x; \widehat{x}).
  \end{align*}

  Similarly construct a version $\widehat{x}' = (\widehat{x}, 0)$ of $\widehat{x}$, whose second coordinate is identically zero.
  Hence for $ x' = (x, j) \sim Q'$, we have
  \begin{align*}
    \norm{x' - \widehat{x}'} &\leq \varepsilon \text{ w.p. } 1 - \delta, \\
    I(x'; \widehat{x}') &= I(x; \widehat{x})
  \end{align*}

  Applying Lemma~\ref{lemma: covering uniform} on $Q'$, we have
  \begin{align*}
    \tau (1-\delta) \logcov_{2\varepsilon,\tau + \delta}(Q') \leq I(x;
    \widehat{x}) + 2.
  \end{align*}

  Since the support of the first coordinate of $Q'$ is the same as the support of $Q$, we have
  \begin{align*}
    \tau (1-\delta)\logcov_{2\varepsilon, \tau +\delta}(Q) \leq I(x;
    \widehat{x}) + 2.
  \end{align*}
 
\end{proof}

We now prove Lemma~\ref{lem: covering continuous}, which allows $(x, \widehat{x})$ to follow an arbitrary distribution.

\restate{lem: covering continuous}
\begin{proof}
	Let $\varepsilon = \eta,$ which is the error in the 	statement of
	the lemma.
  Let $ \gamma > 0$ be a small enough discretization level to be specified later.
  For every $x, \widehat{x} \in \R^n$, let $\bar{x}, \wh{\bar{x}}$ denoted the rounding of $ x , \widehat{x}$ to the nearest multiple of $\gamma$ in each coordinate.

  Let $\bar{R}$ be the discrete distribution induced by this discretization of $ x $. 
  We can create such a distribution by assigning the probability of each cell in the grid to its corresponding coordinate-wise floor.
  This discretization of the support changes the error between $x, \widehat{x}$ in the following way.
  If $\norm{x - \widehat{x}} \leq \varepsilon$ with probability $ 1 - \delta$, an application of the triangle inequality gives 
  \begin{align}\label{eqn: error Rbar}
    \norm{\bar{x} - \wh{\bar{x}}} \leq \varepsilon + 2\gamma\sqrt{n} \text{ with probability} \geq 1-\delta.
  \end{align}

  We also need to take into account the effect discretizing $x, \widehat{x}$ has on their mutual information.
  Note that since $ \bar{x}$ is a function of $x$ alone, and $\wh{\bar{x}}$ is a function of $\widehat{x}$ alone, by the Data Processing Inequality, we have
  \begin{align}\label{eqn: MI R}
    I( \bar{x}; \wh{\bar{x}}) \leq I(x; \widehat{x}).
  \end{align}

  Note that $\bar{R}$ is a distribution on a discrete but infinite set.
  However, for any $\beta \in (0,1]$, we can find a discrete and finite distribution $Q$ such that $ \bar{R} = (1- c_1) Q + c_1 D$, with $c_1 \leq \beta$ and $D$ is some other probability distribution.
  This is feasible because the probabilites of the infinite support of $\bar{R}$ must sum to $1$, and hence we can find a finite subset that sums to atleast $1-\beta$ for any $\beta \in (0,1]$.
  Note that in this process, we only change the marginal of $\bar{x}$ without changing the conditional distribution of $\wh{\bar{x}} | \bar{x}$.
  Let $I(\bar{x} ; \wh{\bar{x}}), I_{Q}(\bar{x} ; \wh{\bar{x}}), I_{D}(\bar{x} ; \wh{\bar{x}})$ denote the mutual information between $\bar{x}, \wh{\bar{x}}$ when the marginal of $\bar{x}$ is $\bar{R}, Q, D,$ respectively.
  From Theorem 2.7.4 in~\cite{cover2012elements}, mutual information is a concave function of the marginal distribution of $\bar{x}$ for a fixed conditional distribution of $\wh{\bar{x}} | \bar{x}$. An application of Eqn~\eqref{eqn: MI R} gives us, 
  \begin{align}
    I(x; \widehat{x}) \geq I(\bar{x} ; \wh{\bar{x}}) &\geq (1 - c_1) I_{Q}(\bar{x}; \wh{\bar{x}}) + c_1 I_{D}(\bar{x}; \wh{\bar{x}}),\\
    & \geq (1-c_1) I_{Q}(\bar{x}; \wh{\bar{x}}) ,\\
    & \geq (1-\beta) I_{Q}(\bar{x}; \wh{\bar{x}}).\label{eqn: MI Rbar}
  \end{align}

  Now since the finite distribution $Q$ has a TV distance of at most $\beta$ to the countable distribution $R$, using Eqn~\eqref{eqn: error Rbar}, we have 
  \begin{align}
    \norm{\bar{x} - \wh{\bar{x}}} \leq \varepsilon + 2\gamma\sqrt{n} \text{ with probability } \geq 1-\beta- \delta \text{ if } \bar{x} \sim Q.\label{eqn: error Q}
  \end{align}

  In order to apply Lemma~\ref{lemma: covering nonuniform} on the distribution $Q$, we need its probability values to be multiples of some discretization level $\alpha$.
  Let $\alpha$ be a small enough quantization level for the probability values. We will specify the value of $\alpha$ later. We can now express the distribution $Q$ as a mixture of two distributions $Q', Q''$.
  The distribution $Q'$ is obtained by flooring the probability values under $Q$ and renormalizing to make them sum to 1. The distribution $Q''$ is the mass not contained in $Q'$, normalized to sum to $1$.
  Since each element in the support of $Q$ loses at most $\alpha$ mass, the total mass in $Q''$ prior to normalization is at most $\alpha N_\beta$, where $N_\beta$ is the cardinaltiy of the support of $Q$.
  This gives
  \begin{align*}
    Q &= ( 1- c_2 ) Q' + c_2 Q'', \; c_2 \leq \alpha N_\beta.
  \end{align*}
  
  From Eqn~\eqref{eqn: error Q}, we have $\norm{\bar{x} - \wh{\bar{x}}} \leq \varepsilon + 2\gamma\sqrt{n}$ with probability $\geq 1-\beta-\delta$ when $\bar{x} \sim Q$.
  Since $Q'$ has a TV distance of at most $ \alpha N_\beta$ to $Q$, if $\bar{x} \sim Q'$, we have 
  \begin{align}
    \norm{\bar{x} - \wh{\bar{x}}} \leq \varepsilon + 2\gamma\sqrt{n} \text{ with probability } \geq 1-\beta- \delta - \alpha N_\beta \text{ if } \bar{x} \sim Q'.\label{eqn: error Q'}
  \end{align}

  Let $I_{Q}(\bar{x} ; \wh{\bar{x}}), I_{Q'}(\bar{x} ; \wh{\bar{x}}), I_{Q''}(\bar{x} ; \wh{\bar{x}})$ denote the mutual information between $\bar{x}, \wh{\bar{x}}$ when the marginal of $\bar{x}$ is $ Q, Q', Q''$ respectively.
  Mutual information is a concave function of the marginal distribution of $\bar{x}$ for a fixed conditional distribution of $\wh{\bar{x}} | \bar{x}$.
  Hence using Eqn~\eqref{eqn: MI Rbar}, we have
  \begin{align}
    \frac{I(x; \widehat{x})}{1 - \beta} \geq I_{Q}(\bar{x} ; \wh{\bar{x}}) &\geq (1 - c_2) I_{Q'}(\bar{x}; \wh{\bar{x}}) + c_2 I_{Q''}(\bar{x}; \wh{\bar{x}}),\\
    & \geq (1-c_2) I_{Q'}(\bar{x}; \wh{\bar{x}}) ,\\
    & \geq (1- \alpha N_\beta) I_{Q'}(\bar{x}; \wh{\bar{x}}).\label{eqn: MI Q}
  \end{align}

  Hence if $\bar{x}\sim Q'$, we have $I(\bar{x}; \wh{\bar{x}}) \leq \frac{I(x ; \widehat{x})}{(1-\alpha N_\beta)(1-\beta)}.$
  Applying Lemma~\ref{lemma: covering nonuniform} on the distribution $Q'$, for any $\tau>0$, we have 
  \begin{align*}
    \tau (1- \beta - \delta - \alpha N_\beta)\logcov_{2\varepsilon + 4\gamma\sqrt{n}, \tau  +
    \beta + \delta + \alpha N_\beta}(Q') \leq \frac{I(x;
  \widehat{x})}{(1-\alpha N_\beta) (1-\beta)} + 2.
  \end{align*}

  Now since $Q'$ has at least $1 - \alpha N_\beta$ of the mass under $Q$ and $Q$ has at least $ 1 - \delta$ of the mass under $\bar{R}$,
  the mass $ \tau + \beta + \delta + \alpha N_\beta$ not covered under $Q'$ can be replaced with $\tau + \beta + 2\delta + 2\alpha N_\beta$ under $\bar{R}$.
  This gives
  \begin{align*}
    \tau (1- \beta - \delta - \alpha N_\beta) \logcov_{2\varepsilon +
    4\gamma\sqrt{n}, \tau + \beta +  2\delta + 2\alpha
  N_\beta}(\bar{R}) \leq \frac{I(x ; \widehat{x})}{(1-\alpha N_\beta)
(1-\beta)} + 2.
  \end{align*}

  Now since we can cover the whole distribution of $R$ by extending each element in the support of $\bar{R}$ by $\gamma$ in each coordinate,
  we can replace the radius $2\varepsilon + 4\gamma\sqrt{n}$ for
  $\bar{R}$ by $2\varepsilon + 6\gamma\sqrt{n}$ for $R$. 
  This gives
  \begin{align*}
    \tau (1- \beta - \delta - \alpha N_\beta)\logcov_{2\varepsilon + 6
    \gamma\sqrt{n}, \tau + \beta +  2\delta + 2\alpha N_\beta}(R) \leq
    \frac{I(x ; \widehat{x})}{(1-\alpha N_\beta) (1-\beta)} + 2.
  \end{align*}

  For $\gamma = \frac{\varepsilon}{6\sqrt{n}}, \beta = \min\left\{
  \frac{\delta}{3}, 1- \sqrt{0.99} \right\}, \alpha N_\beta =
  \min\left\{ \frac{\delta}{3}, 1-\sqrt{0.99} \right\}$, we have
  \begin{align*}
    0.99 \tau (1 - 2\delta)\logcov_{3\varepsilon , \tau + 3\delta }(R)
    \leq I(x; \widehat{x}) + 1.98.
  \end{align*}
\end{proof}

\subsection{Proof of Theorem~\ref{thm: lower}}

\restate{thm: lower}
\begin{proof}
  Throughout the proof, we use the notation $N(R,\delta)$ to denote a
  minimal set of $3\eta$-radius balls that cover at least $1-\delta$
  mass under the distribution $R$.

  Let $B$ be the ball in $N(R,10\delta)$ with smallest marginal
  probability. If we set $S \leftarrow N(R,10\delta) \setminus B$,
  then $S$ contains smaller than $1-10\delta$ of $R$. 

  Let $R = (1-c) R' + c R''$, where the components $R'$ and $R''$ are
  probability distributions restricted to $S$ and its complement $S^c$
  respectively. By the construction of $S$, we have $c > 10 \delta$.
  Note that since $R''$ contributes at least $10\delta$ to $R$, any
  algorithm that succeeds with probability $\geq 1-\delta$ over $R$
  must succeed with probability $\geq 0.9$ over $R''$. 

  Now consider $x \sim R''$. 
  By Lemma~\ref{lemma: DPI} and Lemma~\ref{lemma: MI}, we have
  \begin{align*}
    I(x; \widehat{x}) &\leq I(x; y | A),\\
                        & \leq \frac{m}{2}\log\left( 1 + \frac{r^2}{\sigma^2} \right).
  \end{align*}

  Applying Lemma~\ref{lem: covering
  continuous} on $R''$ with parameters $\tau = \delta=0.1$,
  for the failure probability, we can conclude that 
  \begin{align}
    0.99 \cdot 0.1 \cdot (1-0.2) \log | N(R'', 0.4) | &\leq I(x;
    \xhat) + 1.98 \leq \frac{m}{2} \log\left(1 + \frac{r^2}{\sigma^2}
    \right) + 1.98, \nonumber\\
      \Leftrightarrow m &\geq \frac{0.1584 \; \log|N(R'', 0.4)| -
      3.96}{\log\left( 1 + \frac{r^2}{\sigma^2}
    \right)}.\label{eqn:lower1}
  \end{align}

  We now need to express the covering number of $R''$ in terms of the
  covering number of $R$.

  Note that as $R''$ contains at least $10\delta$ mass under $R$, 
  $N(R'', 0.4)$ contains at least $6\delta$ mass under $R$, .
  Similarly, since $N(R,10\delta)$ contains at least
  $1-10\delta$ mass under $R$, $N(R'',0.4) \cup N(R,10\delta)$ will
  contain at least $4\delta$ mass under $R$.
  Hence, we get
  \begin{align}
    |N(R'',0.4)| + |N(R,10\delta)| & \geq |N(R,4\delta)|
    \Leftrightarrow |N(R'',0.4)| \geq |N(R,4\delta)| -
    |N(R,10\delta)|. \label{eqn:lower2}
  \end{align}

  Now we need to relate $N(R,4\delta)$ with $N(R,10\delta)$. This can
  be accomplished via a simple counting argument. Assume that the
  balls in $N(R,4\delta)$ are ordered in decreasing order of their marginal
  probability, then the last $\frac{10\delta}{1-4\delta}$-fraction
  of balls in $N(R,4\delta)$ must contain at most $10\delta$ mass. This implies
  that the first $\frac{1-10\delta}{1-4\delta}$-fraction of
  $N(R,4\delta)$ must contain at least $1-10\delta$ mass. This gives:
  \begin{align}
    \frac{1-10\delta}{1-4\delta} N(R,4\delta) \geq
    N(R,10\delta)\label{eqn:lower3}.
  \end{align}

  Combining Eqn~\eqref{eqn:lower2},~\eqref{eqn:lower3}, we get
  \begin{align*}
    |N(R'', 0.4)| & \geq |N(R,4\delta)| - \frac{1-10\delta}{1-4\delta}
    |N(R,4\delta)|,\\
                  & = \frac{6\delta}{1-4\delta} |N(R,4\delta)|,\\
                  & \geq 6 \delta|N(R,4\delta)|,\\
    \Leftrightarrow \log |N(R'', 0.4)| & \geq \log |N(R,4\delta)| +
    \log (6\delta).
  \end{align*}

  Substituting in Eqn~\eqref{eqn:lower1}, we get
  \begin{align*}
    m \geq \frac{0.1584 \left(\log |N(R,4\delta)| +
    \log(6\delta)\right) - 3.96}{\log\left(1+\frac{r^2}{\sigma^2}
\right)} .
  \end{align*}

  Since $|N(R,4\delta)|=\cov_{3\eta,4\delta}(R)$ by definition, this
  completes the proof.
\end{proof}
\section{Experimental Setup}\label{sec: appendix experiments}
\subsection{Datasets and Architecture}
For the compressed sensing experiment in Fig~\ref{fig:celeba-l2} and
the inpainting experiment in
Figure~\ref{fig:inpaint-hair} we used the
256$\times$256 GLOW model~\cite{kingma2018glow} from the official
repository.
The test set for Fig~\ref{fig:celeba-l2} consists of the first 10
images used by~\cite{asim2019invertible} in their experiments.

For the compressed sensing experiment in
Fig~\ref{fig:cs-reconstr-ffhq}, \ref{fig:ffhq-l2},
\ref{fig:ffhq-reconstr}, we used the FFHQ NCSNv2
model~\cite{song2020improved} from the official repository.
The test set for Fig~\ref{fig:ffhq-l2} consists of the images
69000-69017 from the FFHQ dataset (this corresponds to the first 18
images in the last batch of FFHQ images).

In Fig~\ref{fig:celeba-l2} and Fig~\ref{fig:ffhq-l2}, the measurements
have noise satisfying $\sqrt{\E\norm{\xi}^2}=16$ and
$\sqrt{\E\norm{\xi}^2}=4$ respectively.

\subsection{Hyperparameter Selection}

\paragraph{CelebA experiments}
For MAP, we used an Adam and Gradient Descent optimizer.
Langevin dynamics only uses Gradient Descent. 
Each algorithm was run with learning rates varying over $\left[ 0.1,
0.01, 0.001, 5 \cdot 10^{-4}, 10^{-4}, 5\cdot 10^{-5}, 10^{-5}, 5\cdot
10^{-6}, 10^{-6}\right]$.
For MAP and Modified-MAP, we also performed 2 random restarts for the initialization $z_0$.

The value of $\gamma$ in Eqn~\eqref{eqn:modified-map-defn} was varied
over $[0, 0.1, 0.01, 0.001]$ for Modified-MAP. MAP uses the
theoretically defined value of $\frac{\sigma^2}{m}$. 

For Langevin dynamics, we vary the value of $\sigma_i$ according to
the schedule proposed by~\cite{song2019generative}. We start with
$\sigma_1=16.0$, and finish with $\sigma_{10}=4.0$, such that
$\sigma_i$ decreases geometrically for $i\in [10].$ For each value of
$i$, we do 200 steps of noisy gradient descent, with the learning rate
schedule proposed by~\cite{song2019generative}.

In order to select the optimal hyperparameters for each $m$, we chose
the hyperparams that give maximum likelihood for Langevin and MAP. For
Modified-MAP, we selected the hyperparameters based on reconstruction
error on a holdout set of 5 images.

\paragraph{FFHQ experiments}
The NCSNv2 model is designed for Langevin dynamics. It can be adapted
to MAP by simply not adding noise at each gradient step. We tune the
initial and final values of $\sigma$ used in~\cite{song2020improved},
along with the initial learning rate.

Unfortunately, it is computationally difficult to obtain the
likelihood associated with each reconstruction, since the NCSNv2 model
only provides $\nabla \log p(x)$. Although one could, in theory, do
numerical integration to find $p(x)$, we selected the optimal
hyperparameters for each $m$ based on reconstruction error on a
holdout set of 5 images.

For the Deep-Decoder, we used the over-parameterized network described
in~\cite{asim2019invertible}, and tuned the learning rate over $[0.4,
0.004, 0.0004]$, and selected the hyper-parameters that optimized the
reconstruction error on a holdout set of 5 images.

\subsection{Computing Infrastructure}
Experiments were run on an NVIDA Quadro P5000.

\end{document}